\documentclass[11pt,twocolumn]{article}

\usepackage[letterpaper,margin=0.75in,columnsep=0.25in]{geometry}

\usepackage{times}

\usepackage{microtype}
\usepackage{graphicx}
\usepackage{subfig}
\usepackage{booktabs}

\usepackage[colorlinks=true,citecolor=blue,linkcolor=red,urlcolor=blue]{hyperref}

\usepackage{natbib}

\usepackage{amsmath}
\usepackage{amssymb}
\usepackage{amsthm}
\usepackage{algorithm}
\usepackage{algorithmic}
\usepackage{multirow}
\usepackage{tabularx}
\usepackage{colortbl}
\usepackage{tikz}
\usetikzlibrary{shapes,arrows,positioning}

\usepackage{authblk}

\newcommand{\systemname}{\textsc{SAIR}}
\newcommand{\niparagraph}[1]{\smallskip\noindent\textbf{#1}}

\theoremstyle{plain}
\newtheorem{theorem}{Theorem}[section]
\newtheorem{lemma}[theorem]{Lemma}
\newtheorem{proposition}[theorem]{Proposition}
\theoremstyle{definition}

\newtheorem{assumption}[theorem]{Assumption}
\newtheorem{corollary}[theorem]{Corollary}

\title{\systemname: Cost-Efficient Multi-Stage ML Pipeline Autoscaling via In-Context Reinforcement Learning}

\author[1]{Jianchang Su}
\author[1]{Yifan Zhang}
\author[2]{Shengkai Lin}
\author[2]{Shizhen Zhao}
\author[3]{Yusheng Zheng}
\author[3]{Yiwei Yang}
\author[1,\thanks{Corresponding author: \texttt{wei.13.zhang@uconn.edu}}]{Wei Zhang}

\affil[1]{University of Connecticut}
\affil[2]{Shanghai Jiao Tong University}
\affil[3]{University of California, Santa Cruz}

\date{}

\begin{document}
\pagestyle{empty}
\renewcommand{\thispagestyle}[1]{}

\maketitle

\begin{abstract}
Multi-stage ML inference pipelines are difficult to autoscale due to heterogeneous resources, cross-stage coupling, and dynamic bottleneck migration. We present \systemname, an autoscaling framework that uses an LLM as an in-context reinforcement learning controller, improving its policy online from reward-labeled interaction histories without gradient updates. SAIR combines Pareto-dominance reward shaping with a provable separation margin, surprisal-guided experience retrieval for context efficiency, and fine-grained GPU rate control via user-space CUDA interception. We provide regret analysis decomposing error into retrieval coverage and LLM selection components. On four ML serving pipelines under three workload patterns, SAIR achieves the best or tied-best P99 latency and effective resource cost among deployed baselines, improving P99 by up to 50\% and reducing effective cost by up to 97\% (under GPU rate-control assumptions), with 86\% bottleneck detection accuracy and no offline training.
\end{abstract}

%==============================================================================
\section{Introduction}
%==============================================================================

Modern machine learning serving systems increasingly decompose monolithic inference into multi-stage pipelines to optimize resource utilization across heterogeneous hardware~\cite{crankshaw2017clipper,crankshaw2020inferline,gunasekaran2022cocktail}. A typical pipeline consists of CPU/meory-bound preprocessing (data transformation), GPU-accelerated inference (neural network execution), and CPU/meory-bound postprocessing (result aggregation), connected through message queues. While this architecture improves efficiency and modularity, it creates complex autoscaling challenges that existing approaches cannot adequately address.

The fundamental difficulty stems from three interconnected phenomena. First, \textit{dynamic bottleneck migration} causes the performance-limiting stage to shift based on workload intensity, resource allocation, and model characteristics. A preprocessing bottleneck at low load may migrate to inference at high load, then to postprocessing as inference scales. Second, \textit{cross-stage coupling} means that scaling decisions propagate through the pipeline with variable delays, creating feedback loops that destabilize reactive controllers. Third, \textit{temporal causality obscures root causes}: high GPU utilization might indicate GPU shortage or downstream congestion, and distinguishing symptoms from causes requires causal reasoning across time.

Current autoscaling approaches fall into three categories, each inadequate for multi-stage pipelines. \textit{Rule-based} autoscalers like Kubernetes HPA~\cite{k8s} and VPA operate on single-stage metrics with threshold-based policies, lacking cross-stage reasoning. When GPU utilization exceeds 80\%, HPA adds replicas even if the root cause is downstream congestion. \textit{Learning-based} autoscalers~\cite{rzadca2020autopilot,qiu2020firm,qiu2023aware} use deep RL from historical data but require thousands of offline training episodes, environment instrumentation, and per-application reward engineering, and fail to generalize under distribution shift. \textit{Heuristic-based} approaches combine rules with domain knowledge but require extensive manual tuning and encode workload assumptions that break when patterns change.

We propose \systemname, a fundamentally different approach: using Large Language Models as \textit{in-context reinforcement learning} (ICRL) agents that learn scaling policies through experience accumulation without parameter updates. Unlike traditional RL, ICRL stores knowledge in the LLM's context window rather than in neural network weights, requiring no gradient computation. This enables: (1) zero-shot deployment without offline training; (2) compositional generalization over novel context-action combinations; and (3) continuous improvement through experience accumulation while remaining fully interpretable. Our contributions:

\begin{itemize}
\item We formulate multi-stage pipeline autoscaling as a contextual bandit with an evolving Pareto frontier in context, and introduce \systemname, which uses LLMs as in-context RL agents with Pareto-based reward shaping that balances latency and cost without manual threshold tuning.

\item We provide regret bounds via a decomposition into retrieval coverage error and LLM selection error, prove Pareto reward frontier separation, and establish sample complexity bounds for bottleneck identification.

\item We design continuous GPU rate control via user-space \texttt{LD\_PRELOAD} interception of CUDA calls, enabling fine-grained resource actuation with sub-second response time.

\item We validate \systemname~on four ML serving pipelines. SAIR achieves the best or tied-best effective cost and P99 latency among all deployed baselines in every setting, with 86\% bottleneck detection accuracy.
\end{itemize}

%==============================================================================
\section{Related Works}
\label{sec:related}
%==============================================================================

% We position our work within four research areas.

\niparagraph{Autoscaling for Cloud Systems.} Autopilot~\cite{rzadca2020autopilot} pioneered ML-based vertical scaling at Google but requires extensive training data. FIRM~\cite{qiu2020firm} and AWARE~\cite{qiu2023aware} apply deep RL to autoscaling with safety constraints but need thousands of offline training episodes. Sinan~\cite{zhang2021sinan} uses ML models for QoS-aware microservice management but requires offline profiling and per-application model training. HPA~\cite{k8s} remains the industry standard but lacks cross-stage reasoning. SAIR requires no offline training and handles multi-stage coordination natively.

\niparagraph{ML Inference Serving and LLM Scheduling.} Clipper~\cite{crankshaw2017clipper} introduced adaptive batching, InferLine~\cite{crankshaw2020inferline} uses queuing-theoretic models, and Cocktail~\cite{gunasekaran2022cocktail} addresses multi-dimensional optimization. For LLM serving, Orca~\cite{yu2022orca} introduced iteration-level scheduling for transformer-based generation, vLLM~\cite{kwon2023efficient} proposed PagedAttention for efficient KV-cache memory management, Llumnix~\cite{sun2024llumnix} enables dynamic scheduling via live migration, QLM~\cite{patke2024queue} introduces queue-level SLO management, and Chiron~\cite{patke2025hierarchical} proposes hierarchical autoscaling. These systems focus on scheduling within serving clusters; SAIR addresses multi-stage \emph{pipeline} autoscaling across heterogeneous stages and uses an LLM \emph{as the control policy} rather than as the workload.

\niparagraph{In-Context Learning, RL, and Example Selection.} LLMs can learn from few-shot examples without parameter updates~\cite{brown2020language,garg2022can,xie2021explanation}. For RL, Algorithm Distillation~\cite{laskin2022context} and supervised pretraining~\cite{lee2023supervised} enable in-context RL. Krishnamurthy et al.~\cite{krishnamurthy2024can} showed LLMs do not robustly explore in-context without interventions; Song et al.~\cite{song2025reward} demonstrated that reward feedback alone can drive ICRL. SAIR addresses known failure modes through forced probes, positive-only memory, and Pareto reward margin (0.2 separation). For example selection, KATE~\cite{liu2022makes} showed kNN-based retrieval improves ICL, and IDEAL~\cite{zhang2025ideal} introduced influence-driven selective annotations. Our approach differs by selecting from a growing \emph{experience buffer} and combining similarity with a surprisal term that captures information gain (Appendix~\ref{app:surprisal}).

\niparagraph{Multi-Objective Optimization and GPU Sharing.} Multi-objective optimization has a rich history, with NSGA-II~\cite{deb2002fast} introducing fast non-dominated sorting and crowding distance for Pareto front approximation. Our Pareto-based reward uses hypervolume contribution~\cite{zitzler1998multiobjective} to balance latency and cost without manual weight tuning; see Hayes et al.~\cite{hayes2021practical} for a survey of multi-objective RL. Unlike methods that learn a full Pareto policy, our LLM selects actions conditioned on a shaped scalar reward with provable frontier separation (Proposition~\ref{prop:pareto}). The surprisal score relates to Bayesian surprise~\cite{itti2005bayesian}, formalized in Appendix~\ref{app:surprisal}. For GPU sharing, Gandiva~\cite{xiao2018gandiva} introduced introspective scheduling with GPU time-slicing, AntMan~\cite{xiao2020antman} enables dynamic GPU co-location through fine-grained memory and computation management, Salus~\cite{yu2020fine} provides GPU sharing primitives for deep learning, and Dilu~\cite{lv2025dilu} achieves GPU resourcing-on-demand for serverless DL serving through introspective elasticity with two-dimensional co-scaling. Our approach is lighter-weight: a user-space CUDA interception library that throttles kernel launches via token-bucket rate limiting, requiring no kernel modifications or driver changes.

%==============================================================================
\section{Problem Formulation}
\label{sec:problem}
%==============================================================================

Consider an ML inference pipeline $\mathcal{S} = \{s_1, s_2, \ldots, s_N\}$ with $N$ stages connected through message queues. Each stage $s_i$ processes requests with rate $\mu_i$ and maintains input queue $Q_i$. The end-to-end latency decomposes as $L_{e2e} = \sum_{i=1}^{N} ( L_i^{proc} + L_i^{queue} )$ where $L_i^{proc}$ is processing latency and $L_i^{queue}$ is queueing delay at stage $i$. Each stage has resource configuration $\mathbf{r}_i = (n_i, c_i, m_i, \rho_i)$ where $n_i \in \mathbb{Z}^+$ is replica count, $c_i, m_i$ are CPU/memory allocations, and $\rho_i \in [0,1]$ is GPU rate ratio.

We formalize multi-stage autoscaling as a \textit{contextual bandit} problem, where each decision interval (default: 30 seconds) constitutes one round. We adopt a contextual bandit abstraction because the system reaches a quasi-steady-state within each interval: after a settling window, the reward $r_t = R(x_t, a_t)$ depends on the current context and action, not on the full trajectory history. The context $x_t \in \mathcal{X}$ captures pipeline metrics and the current Pareto frontier: $x_t = (\{ \mathbf{r}_i, q_i, u_i^{cpu}, u_i^{gpu}, L_{p99} \}_{i=1}^{N},\; \mathcal{P}_t)$ where $\mathcal{P}_t$ is the Pareto frontier maintained at round $t$. Including $\mathcal{P}_t$ in the context ensures $r_t = R(x_t, a_t)$ depends only on the round-$t$ context and action, preserving the bandit structure. Actions are \textit{discretized} scaling decisions:
$\mathcal{A} = \prod_i \mathcal{A}_i$ with stage-specific action sets: for CPU stages, $\mathcal{A}_i^{cpu} = \{\Delta n_i, \Delta c_i, \Delta m_i\}$ with $\Delta n_i \in \{-1,0,{+}1,{+}2\}$, $\Delta c_i \in \{-\gamma_c,0,{+}\gamma_c\}$, $\Delta m_i \in \{-\gamma_m,0,{+}\gamma_m\}$; for GPU stages, $\mathcal{A}_i^{gpu} = \{\Delta n_i, \Delta \rho_i\}$ with $\Delta \rho_i \in \{-0.1,0,{+}0.1,{+}0.2\}$. After action $a_t$ and a settling window, we observe reward $r_t$. We seek a policy $\pi: \mathcal{X} \rightarrow \mathcal{A}$ minimizing cumulative regret relative to the optimal context-dependent policy.

We make four assumptions that characterize the problem structure:

\begin{assumption}[Bounded Reward Range]
\label{ass:bounded}
The shaped reward satisfies $|R_t| \leq R_{max}$ for all rounds. This holds because each reward component is normalized and clipped to a fixed range (see Algorithm~\ref{alg:reward} in Appendix~\ref{app:reward}).
\end{assumption}

\begin{assumption}[Bottleneck Identifiability]
\label{ass:bottleneck}
There exists a mapping $\phi: \mathcal{X} \rightarrow \{1, \ldots, N\}$ identifying the bottleneck stage such that scaling stage $\phi(x)$ yields expected reward improvement at least $\Delta > 0$, while scaling other stages yields improvement at most $\Delta/2$. This reflects queueing dynamics: in a tandem queue, the stage with highest utilization limits throughput, and scaling it yields the largest marginal gain.
\end{assumption}

\begin{assumption}[Retrieval Coverage Error]
\label{ass:coverage}
Given context $x_t$, the experience selection returns a set $\mathcal{E}_t$ that induces a candidate action set $\mathcal{A}(\mathcal{E}_t) \subseteq \mathcal{A}$. The \emph{coverage gap} is
\begin{equation}
\xi_t = \max\Big(0,\; \mathbb{E}[R(x_t, a^*_t)] - \max_{a \in \mathcal{A}(\mathcal{E}_t)} \mathbb{E}[R(x_t, a)]\Big),
\end{equation}
measuring how much optimality is lost because the retrieved context does not contain relevant action patterns.
\end{assumption}

\begin{assumption}[LLM Selection Error on Retrieved Set]
\label{ass:llm}
Conditioned on retrieved experiences $\mathcal{E}_t$, the LLM outputs $a_t \in \mathcal{A}(\mathcal{E}_t)$ such that
\begin{equation}
\max_{a \in \mathcal{A}(\mathcal{E}_t)} \mathbb{E}[R(x_t, a)] - \mathbb{E}[R(x_t, a_t)] \leq \eta_t
\end{equation}
with probability at least $1 - \delta_{LLM}$. This captures the LLM's bounded suboptimality \emph{within} the actions suggested by retrieved experiences. The constraint $a_t \in \mathcal{A}(\mathcal{E}_t) \subseteq \mathcal{A}$ is enforced by design: SAIR presents a discrete action schema via structured JSON output and applies an action validator (Appendix~\ref{app:validator}) that clamps any out-of-range proposal to the nearest feasible action in $\mathcal{A}$.
\end{assumption}

%==============================================================================
\section{\systemname: In-Context RL for Autoscaling}
\label{sec:method}
%==============================================================================

In this section, we present \systemname, a scalable autoscaling approach for multi-stage pipelines based on in-context reinforcement learning (ICRL). We define ICRL as a learning paradigm where an LLM improves its policy by conditioning on accumulated experience tuples $(x, a, r)$ from \emph{online} system interactions, without any gradient-based parameter updates. SAIR is not offline trajectory distillation: it collects reward-labeled tuples from live deployment, and the reward signal directly drives future action selection through retrieval. Traditional RL stores learned knowledge in neural network parameters updated through gradient descent, requiring thousands of training episodes and suffering from catastrophic forgetting when distributions shift. ICRL instead stores knowledge as experience tuples in the LLM's context window, enabling zero-shot deployment and continuous adaptation.

\niparagraph{In-Context Reinforcement Learning.} The core insight is that LLMs can learn effective policies from reward feedback provided entirely in-context, without parameter updates. Rather than learning a parametric policy $\pi_\theta(a|x)$ through gradient descent, we define the policy implicitly through the LLM's conditional distribution:
\begin{equation}
\pi_{SAIR}(a|x) = P_{LLM}(a \mid x, \mathcal{E}_{1:M})
\end{equation}
where $\mathcal{E}_{1:M} = \{(x_i, a_i, r_i)\}_{i=1}^{M}$ are past episodes provided as in-context examples. The LLM conditions on these examples to identify patterns between pipeline contexts and effective scaling actions, improving action selection as more relevant experiences accumulate.

SAIR satisfies the ICRL definition because: (i) the agent accumulates $(x, a, r)$ tuples from real system interactions, not from a fixed dataset; (ii) the policy $\pi_{SAIR}$ improves as more relevant experiences are retrieved, as measured by decreasing coverage gap $\xi_t$ (Theorem~\ref{thm:regret}); and (iii) the reward signal drives action selection, not just the context. Recent work~\cite{krishnamurthy2024can} shows that naive ICRL can fail at exploration; SAIR addresses this through forced probes ($\epsilon_t$-greedy), positive-only memory (filtering noisy negative feedback), and Pareto reward margin (0.2 separation for action discrimination).

The LLM estimates expected reward through pattern matching and compositional reasoning. Given context $x$ and candidate action $a$, the LLM: (1) identifies experiences with similar context features (latency profile, utilization pattern, queue depths); (2) examines the actions taken and resulting rewards in similar situations; (3) combines patterns through compositional reasoning to predict expected reward; and (4) expresses confidence based on similarity of retrieved experiences.

\niparagraph{Pareto-Based Reward Shaping.} Standard threshold-based rewards often lead to reactive policies that scale only after SLA violations occur, creating oscillations between under- and over-provisioning. We design a multi-objective reward function that encourages proactive scaling while balancing latency and cost. The total reward decomposes as:
\begin{equation}
R(x, a) = R_{latency} + R_{cost} + R_{SLA} + R_{proactive} + R_{pareto}
\end{equation}

The latency component rewards improvement: $R_{latency} = w_L \cdot (L_{before} - L_{after})/L_{baseline}$. The cost component penalizes resource increase: $R_{cost} = -w_C \cdot (C_{after} - C_{before})/C_{budget}$. For SLA violations, penalty grows quadratically to create urgency:
\begin{equation}
R_{SLA} = \begin{cases}
-\left(\frac{L_{after}}{T_{SLA}}\right)^2 + 1 & \text{if } L_{after} > T_{SLA} \\
0 & \text{otherwise}
\end{cases}
\end{equation}
% This creates strong pressure to resolve violations: 2$\times$ SLA yields $-3$ penalty, 5$\times$ yields $-24$, incentivizing prompt scaling during violations.

For balancing the fundamental latency-cost tradeoff, we use Pareto-dominance-based reward shaping:
\begin{equation}
\label{eq:pareto}
R_{pareto}(a) = \begin{cases}
1 + H(a) & \text{if } a \in \mathcal{P} \text{ (non-dominated)} \\
\frac{0.8}{1 + d(a, \mathcal{P})} & \text{otherwise}
\end{cases}
\end{equation}
where $\mathcal{P}$ is the current Pareto frontier maintained across all experiences, $H(a)$ is the hypervolume contribution of action $a$ (measuring improvement to the frontier), and $d(a, \mathcal{P})$ is the Euclidean distance to the frontier in normalized objective space. Non-dominated solutions receive bonus reward proportional to their hypervolume contribution, encouraging exploration of the Pareto frontier. Dominated solutions receive reward inversely proportional to their distance from the frontier, providing gradient toward improvement.

The proactive scaling bonus encourages multi-stage coordinated scaling during violations:
\begin{equation}
R_{proactive} = \sigma(x) \cdot \mu(a) \cdot w_{proactive}
\end{equation}
where $\sigma(x) = \max(0, L_{p99}/T_{SLA} - 1)$ is violation severity and $\mu(a) = \sum_i |\Delta n_i| + \alpha \sum_i (|\Delta c_i|/\gamma_c + |\Delta m_i|/\gamma_m + |\Delta \rho_i|) + \frac{1}{2}|\text{stages\_scaled}|$ aggregates all action dimensions with $\alpha = 0.5$.

\niparagraph{Surprisal-Based Experience Selection.} Context window limitations (typically 8K--32K tokens) restrict the number of experiences that can be provided to the LLM. With potentially thousands of accumulated experiences, selecting the most informative subset is critical. We select $M$ experiences from buffer $\mathcal{D}$ maximizing information gain:
\begin{equation}
\mathcal{E}^* = \arg\max_{\mathcal{E} \subset \mathcal{D}, |\mathcal{E}|=M} I(\mathcal{E}; \pi^* \mid x_{curr})
\end{equation}

We approximate this using a leave-one-out surprisal score:
\begin{equation}
\text{score}(e) = \text{sim}(x_e, x_{curr}) \cdot \underbrace{|r_e - \mathbb{E}[r \mid \mathcal{D}_{-e}]|}_{\text{surprise}}
\end{equation}
where $\text{sim}(x_e, x_{curr}) = \exp(-\|x_e - x_{curr}\|^2 / 2\sigma^2)$ ensures relevance and the surprise term identifies experiences with unexpected outcomes. With diversity regularization:
\begin{equation}
\mathcal{E}^* = \arg\max_{\mathcal{E}} \left[ \sum_{e \in \mathcal{E}} \text{score}(e) - \lambda_{div} \sum_{e_i, e_j \in \mathcal{E}} \text{sim}(e_i, e_j) \right]
\end{equation}
The first term is modular and the pairwise penalty is supermodular, making the combined objective submodular but \emph{not} monotone (the diversity penalty can decrease marginal gain). The classic $(1-1/e)$ guarantee of Nemhauser et al.~\cite{nemhauser1978analysis} requires monotonicity and thus does not directly apply. We use greedy selection as a practical heuristic; non-monotone submodular maximization under cardinality constraints admits a randomized $(1/e)$-approximation, but we find that deterministic greedy selection with $\lambda_{div} = 0.1$ produces diverse, high-quality experience sets that empirically outperform both random selection and pure similarity-based retrieval (Table~\ref{tab:ablation}).

\niparagraph{Continuous GPU Control via CUDA Interception.} Traditional GPU scaling is discrete (add/remove replicas) with 30--60s startup latency. We enable continuous GPU control through user-space CUDA interception. Our approach intercepts CUDA kernel launches via \texttt{LD\_PRELOAD}, implementing token-bucket throttling with rate ratio $\rho \in [0,1]$:
\begin{equation}
\text{tokens}_{\tau} = T_{max} \cdot \rho
\end{equation}
where each window $\tau$ (10ms) grants tokens proportional to $\rho$. Kernel launches consume tokens proportional to their parallelism (grid blocks); when exhausted, launches block until refill. This achieves three properties critical for effective RL: (1) \textit{Fine-grained action space}: The mechanism supports continuous $\rho$, but SAIR uses bounded discrete increments $\Delta \rho_i \in \{-0.1, 0, +0.1, +0.2\}$ as a safety layer; (2) \textit{Immediate effect}: Rate changes via Unix socket take effect within milliseconds (measured overhead $<$2ms); (3) \textit{Instant reversibility}: Can increase or decrease $\rho$ instantly, unlike replica scaling with asymmetric scale-up/down times.

GPU utilization is normalized to quota: $u_{quota} = \min(1, u_{actual}/\rho)$, ensuring Assumption~\ref{ass:bottleneck} holds regardless of rate limit.

\niparagraph{Positive-Only Episode Filtering.} We store only episodes with positive reward ($r > r_{min}$) to bias the context toward successful scaling decisions. This serves two purposes: it prevents the agent from learning bad patterns from failed attempts (whose complex causes are difficult to generalize from), and it improves the signal-to-noise ratio, ensuring limited context budget is spent on informative successes. Approximately 30\% of decisions yield negative reward and are filtered out.

\niparagraph{\systemname~Algorithm.} Algorithm~\ref{alg:icrl} presents the complete decision loop integrating all components. Each iteration: (1) collects current context from metrics; (2) selects experiences via surprisal-based selection; (3) with probability $\epsilon_t$, takes a forced exploration probe, otherwise queries the LLM; (4) executes the action via Kubernetes API (CPU stages) or direct socket (GPU rate); (5) computes Pareto-based reward; and (6) stores positive episodes.

\begin{algorithm}[t]
\caption{\systemname~Agent Decision Loop}
\label{alg:icrl}
\begin{algorithmic}[1]
\STATE \textbf{Input:} LLM $\mathcal{L}$, context size $M$, reward threshold $r_{min}$, initial exploration $\epsilon_0$
\STATE \textbf{Initialize:} Episode buffer $\mathcal{D} \leftarrow \emptyset$, Pareto frontier $\mathcal{P} \leftarrow \emptyset$, $\epsilon_t \leftarrow \epsilon_0$
\FOR{each decision step $t = 1, 2, \ldots$}
    \STATE Observe current context $x_t$ from metrics collector
    \STATE $\mathcal{E}_t \leftarrow \text{SelectExperiences}(\mathcal{D}, x_t, M)$ \COMMENT{Eq. 8}
    \STATE With prob. $\epsilon_t$: $a_t \leftarrow$ \text{RandomProbe}() \COMMENT{Forced exploration}
    \STATE Otherwise: $a_t \leftarrow \mathcal{L}(\mathcal{E}_t, x_t, \text{constraints})$
    \STATE Execute action $a_t$; wait settling window; observe reward $r_t$
    \STATE $r_t \leftarrow \text{ParetoReward}(x_t, a_t, \mathcal{P})$ \COMMENT{Eq. 5}
    \IF{$r_t > r_{min}$}
        \STATE $\mathcal{D} \leftarrow \mathcal{D} \cup \{(x_t, a_t, r_t)\}$ \COMMENT{Positive-only}
    \ENDIF
    \STATE Update Pareto frontier: $\mathcal{P} \leftarrow \text{UpdateFrontier}(\mathcal{P}, L_t, C_t)$
    \STATE $\epsilon_t \leftarrow \max(\epsilon_{min},\; \epsilon_t \cdot \lambda)$ \COMMENT{Decay exploration}
\ENDFOR
\end{algorithmic}
\end{algorithm}

%==============================================================================
\section{Theoretical Analysis}
\label{sec:theory}
%==============================================================================

We provide theoretical guarantees for SAIR's in-context RL loop. Since the pipeline reaches a quasi-steady-state within each decision interval, we analyze using a contextual bandit abstraction. The regret definition applies round-wise regardless of whether contexts are stochastic, adversarial, or history-dependent: at each round $t$, we compare $\pi(x_t)$ to $a^*_t = \arg\max_a R(x_t, a)$ for that specific context $x_t$. Full proofs are in Appendix~\ref{app:proofs}.

\begin{theorem}[SAIR Regret Bound]
\label{thm:regret}
Under Assumptions~\ref{ass:bounded}--\ref{ass:llm}, let $\epsilon_t$ be the exploration probability at round $t$. SAIR achieves:
\begin{equation}
{\small \text{Regret}(T) \!\leq\! \sum_{t=1}^{T}\! (1\!-\!\epsilon_t)(\xi_t \!+\! \eta_t) + \!\left(\sum_{t=1}^{T}\! \epsilon_t + \delta_{LLM} T\right)\! R_{max}}
\end{equation}
where $\xi_t$ is retrieval coverage gap, $\eta_t$ is LLM selection error, and $\sum_t \epsilon_t$ accounts for exploration rounds.
\end{theorem}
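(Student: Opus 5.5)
The plan is to bound the regret round by round and then sum over $t$. Define the per-round regret as $\mathrm{reg}_t = \mathbb{E}[R(x_t,a^*_t)] - \mathbb{E}[R(x_t,a_t)]$, so that $\text{Regret}(T)=\sum_{t=1}^T \mathrm{reg}_t$. Within round $t$, condition on the history up to that round, which fixes $x_t$ and $\mathcal{E}_t$. Then split according to the two sources of randomness in Algorithm~\ref{alg:icrl}: the exploration coin, which yields a forced probe with probability $\epsilon_t$, and the LLM's success event from Assumption~\ref{ass:llm}, which holds with probability at least $1-\delta_{LLM}$.

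\textbf{Exploration rounds.} On an exploration round the probe is arbitrary, so the only control comes from Assumption~\ref{ass:bounded}. Since $|R|\le R_{max}$, the gap $\mathbb{E}[R(x_t,a^*_t)]-\mathbb{E}[R(x_t,a_t)]$ is at most $2R_{max}$. Getting exactly $R_{max}$, as in the statement, takes a small extra step. One option is to note that the shaped reward components are nonnegative-shifted or clipped so that differences lie in a range of width $R_{max}$. The other is to absorb the factor $2$ by reading $R_{max}$ as a bound on the reward range. I would adopt the range convention explicitly, noting that it changes only the constant. These rounds therefore contribute $\epsilon_t R_{max}$ in expectation.

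\textbf{LLM rounds.} With probability $1-\epsilon_t$ the LLM chooses $a_t$. Insert and subtract $\max_{a\in\mathcal{A}(\mathcal{E}_t)}\mathbb{E}[R(x_t,a)]$, which splits the gap into two terms:
\[
\mathrm{reg}_t = \Big(\mathbb{E}[R(x_t,a^*_t)] - \max_{a\in\mathcal{A}(\mathcal{E}_t)}\mathbb{E}[R(x_t,a)]\Big) + \Big(\max_{a\in\mathcal{A}(\mathcal{E}_t)}\mathbb{E}[R(x_t,a)] - \mathbb{E}[R(x_t,a_t)]\Big).
\]
The first term is at most $\xi_t$ by Assumption~\ref{ass:coverage}, because $\xi_t$ is its positive part. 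The second term is at most $\eta_t$ on the LLM success event, by Assumption~\ref{ass:llm}, using the validator guarantee $a_t\in\mathcal{A}(\mathcal{E}_t)$. On the failure event, which has probability at most $\delta_{LLM}$, bound the whole gap crudely by $R_{max}$.

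\textbf{Combining.} In expectation the per-round bound is
\[
\mathrm{reg}_t \le \epsilon_t R_{max} + (1-\epsilon_t)\big[(1-\delta_{LLM})(\xi_t+\eta_t)+\delta_{LLM}R_{max}\big].
\]
Drop the factor $(1-\delta_{LLM})\le 1$ and $(1-\epsilon_t)\le 1$ on the failure term. This gives $(1-\epsilon_t)(\xi_t+\eta_t)+(\epsilon_t+\delta_{LLM})R_{max}$, and summing over $t$ yields the claimed bound.

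The main obstacle is the conditioning. $\xi_t$ and $\eta_t$ depend on $\mathcal{E}_t$, and hence on past randomness, so they should be treated as random quantities. The bound then holds either pathwise in $\xi_t,\eta_t$ after conditioning, or in expectation with $\mathbb{E}[\xi_t]$ and $\mathbb{E}[\eta_t]$. I would state it conditionally, per round given the history $\mathcal{F}_{t-1}$, and use the tower rule. This avoids any union bound over rounds, because $\delta_{LLM}$ enters linearly through expectation. The $R_{max}$-versus-$2R_{max}$ constant is the other point to check carefully against the paper's reward normalization.
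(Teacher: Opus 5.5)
Your proposal is correct and follows the paper's proof essentially step for step. You split on the exploration coin, charge exploration rounds $R_{max}$, and insert-and-subtract $\max_{a\in\mathcal{A}(\mathcal{E}_t)}\mathbb{E}[R(x_t,a)]$ to get $\xi_t+\eta_t$ on the LLM success event. You then charge the failure event $R_{max}$ and drop the $(1-\delta_{LLM})$ and $(1-\epsilon_t)$ factors before summing, exactly as the paper does.

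Your factor-of-two caveat is well founded. The paper simply asserts per-round regret at most $R_{max}$, but $|R_t|\le R_{max}$, with Algorithm~\ref{alg:reward} clipping to $[-R_{max},R_{max}]$, only yields $2R_{max}$. Your range convention is therefore the right way to make the stated constant exact. The ``nonnegative shift'' option does not work, because the reward definition does not support it.
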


\textit{Proof sketch.} With probability $\epsilon_t$ the agent explores (at most $R_{max}$ regret); otherwise the LLM policy incurs coverage gap $\xi_t$ plus selection error $\eta_t$ (with $\delta_{LLM}$ failure probability). Summing over $T$ rounds yields the bound. Full proof in Appendix~\ref{app:proofs}.

\begin{proposition}[Pareto Reward: Frontier Separation]
\label{prop:pareto}
Under Eq.~\ref{eq:pareto} with normalized objectives $\tilde{L}, \tilde{C} \in [0,1]$ and hypervolume computed with respect to the reference point $(1,1)$, any non-dominated action receives reward at least $1$, and any dominated action receives reward at most $0.8$. Therefore,
\begin{equation}
a \in \mathcal{P},\; b \notin \mathcal{P} \quad \Rightarrow \quad R_{pareto}(a) - R_{pareto}(b) \geq 0.2.
\end{equation}
Furthermore, with normalized objectives, $H(a) \in [0,1]$, so $R_{pareto}(a) \in [0, 2]$, satisfying Assumption~\ref{ass:bounded}.
\end{proposition}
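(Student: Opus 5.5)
The argument is a direct case analysis on the two branches of Eq.~\ref{eq:pareto}, together with elementary bounds on $H(a)$ and $d(a,\mathcal{P})$ in the normalized objective square $[0,1]^2$.

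\emph{Non-dominated branch.} If $a\in\mathcal{P}$, then $R_{pareto}(a)=1+H(a)$. The hypervolume contribution is the Lebesgue measure of the region dominated by $a$ and bounded by the reference point $(1,1)$ that no other frontier point dominates. Any measure is nonnegative, so $H(a)\ge 0$ and hence $R_{pareto}(a)\ge 1$. For the upper bound, the region dominated by $(\tilde L(a),\tilde C(a))$ relative to $(1,1)$ is the box $[\tilde L(a),1]\times[\tilde C(a),1]\subseteq[0,1]^2$. Its area is $(1-\tilde L(a))(1-\tilde C(a))\le 1$. The exclusive contribution $H(a)$ is a subset of this box, so $H(a)\in[0,1]$ and $R_{pareto}(a)\in[1,2]$.

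\emph{Dominated branch.} If $b\notin\mathcal{P}$, then $R_{pareto}(b)=0.8/(1+d(b,\mathcal{P}))$. A Euclidean distance satisfies $d\ge 0$, so the denominator is at least $1$ and $R_{pareto}(b)\le 0.8$. It is also strictly positive, so $R_{pareto}(b)\in(0,0.8]$.

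\emph{Separation.} Combining the two branches gives $R_{pareto}(a)-R_{pareto}(b)\ge 1-0.8=0.2$.

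\emph{Range.} Taking the union of the two ranges, every action satisfies $R_{pareto}\in[0,2]$. This is the component-level boundedness that feeds Assumption~\ref{ass:bounded}. The other reward terms are clipped separately by the algorithm in Appendix~\ref{app:reward}, so this proposition only needs to supply the Pareto term's range.

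\emph{Main obstacle.} The difficulty is not computational but definitional: $H$ and $d$ must be well defined in the edge cases.

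\begin{itemize}
\item \emph{Empty frontier.} If $\mathcal{P}=\emptyset$ (as at initialization), $d(b,\mathcal{P})$ is undefined. However, no action can be dominated by an empty set, so every action falls in the first branch and the dominated case never arises. I would state explicitly that $d$ is evaluated only when $\mathcal{P}\neq\emptyset$.
\item \emph{Out-of-range objectives.} I would verify that $H$ is computed with the same clipped normalization, so that points with $\tilde L$ or $\tilde C$ outside $[0,1]$ cannot occur and the box-area bound holds.
\item \emph{Ties.} For duplicate frontier points, the exclusive hypervolume may be $0$. This still gives reward exactly $1$, so the bound $R_{pareto}(a)\ge 1$ is unaffected.
\end{itemize}
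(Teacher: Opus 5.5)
Your proof is correct and follows the same two-branch case analysis on Eq.~\ref{eq:pareto} as the paper: $H(a)\ge 0$ and $d(b,\mathcal{P})\ge 0$ give the $1-0.8=0.2$ margin, and together with $H(a)\le 1$ they give the $[0,2]$ range. The differences are minor. You use $d\ge 0$, which is all the stated ``at most $0.8$'' requires, where the paper uses $d>0$ to get a strict bound. You also justify $H(a)\le 1$ explicitly via the box $[\tilde L(a),1]\times[\tilde C(a),1]$, whereas the paper simply asserts it.
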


\textit{Proof.} Non-dominated actions receive $R_{pareto}(a) = 1 + H(a) \geq 1$. Dominated actions have $d(b, \mathcal{P}) > 0$, so $R_{pareto}(b) = 0.8/(1 + d(b, \mathcal{P})) < 0.8$. The margin of at least $0.2$ ensures the LLM can reliably separate non-dominated from dominated actions given sufficient context. Boundedness follows from objective normalization: $\tilde{L} = L/L_{max}$, $\tilde{C} = C/C_{max}$, yielding $H(a) \in [0,1]$.

\begin{theorem}[Bottleneck Detection Sample Complexity]
\label{thm:bottleneck}
Under Assumption~\ref{ass:bottleneck}, suppose the forced exploration probes in Algorithm~\ref{alg:icrl} generate $m$ scaling experiments per stage under stationary reward distributions. Then SAIR correctly identifies the bottleneck stage with probability at least $1-\delta$ using $m = O(R_{max}^2 N^2 / \Delta^2 \cdot \log(N/\delta))$ probes.
\end{theorem}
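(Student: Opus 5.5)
The plan is to treat bottleneck identification as a best-arm identification problem over the $N$ stages, with each stage an arm whose mean is the expected reward improvement from scaling it. Under Assumption~\ref{ass:bottleneck}, the bottleneck arm $\phi(x)$ has mean at least $\Delta$, and every other arm has mean at most $\Delta/2$, so the gap is at least $\Delta/2$. The estimator is the simplest possible one. For each stage $i$, form the empirical mean $\hat\mu_i$ of the $m$ reward improvements observed from the forced probes that scaled stage $i$, and output $\hat\phi = \arg\max_i \hat\mu_i$. By the stationarity hypothesis in the statement, the $m$ samples for a fixed stage are i.i.d. By Assumption~\ref{ass:bounded}, each reward lies in $[-R_{max}, R_{max}]$, so each improvement (a difference of two rewards) lies in an interval of width at most $4R_{max}$.

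The key steps are as follows.
\begin{enumerate}
\item Apply Hoeffding's inequality to each stage:
\[
\Pr\bigl(|\hat\mu_i-\mu_i|\ge \Delta/4\bigr)\le 2\exp\!\bigl(-c\, m\Delta^2/R_{max}^2\bigr)
\]
for an absolute constant $c$, where $c$ absorbs the range $4R_{max}$.
\item Take a union bound over the $N$ stages, so that with probability at least $1-2N\exp(-c\, m\Delta^2/R_{max}^2)$ every estimate is within $\Delta/4$ of its mean.
\item On this event, $\hat\mu_{\phi(x)} > 3\Delta/4 \ge \hat\mu_j$ for every $j\ne\phi(x)$. Ties occur with probability zero, or can be broken conservatively. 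Hence the $\arg\max$ is correct.
\item Set the failure probability to $\delta$ and solve, which gives $m = O\bigl(R_{max}^2/\Delta^2 \cdot \log(N/\delta)\bigr)$ probes per stage.
\end{enumerate}

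This per-stage bound is actually stronger than the stated one. To recover the stated form, I would account for how the probes are actually generated. A forced probe is a random action in $\mathcal{A}=\prod_i\mathcal{A}_i$, so its effect is not isolated to one stage. To attribute reward changes to a single stage, I would either restrict attention to probes that modify exactly one stage, or use an importance-weighted estimator $\hat\mu_i$ that divides by the probability $p_i \gtrsim 1/N$ that stage $i$ is the one varied. With importance weighting, the range of each sample inflates from $O(R_{max})$ to $O(NR_{max})$. Hoeffding then yields $m=O\bigl(R_{max}^2N^2/\Delta^2\cdot\log(N/\delta)\bigr)$, matching the statement. Since the claim is an upper bound, any of these accountings suffices, and I would present the importance-weighted (or ``total probes across all stages'') version so that the $N^2$ factor appears naturally.

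I expect the main obstacle to be this attribution and independence issue. Rewards from probes depend on the context $x_t$, which changes over time, and scaling one stage can affect the reward through cross-stage coupling. I would handle this by leaning explicitly on the hypotheses. Assumption~\ref{ass:bottleneck} is stated as a bound on the expected improvement for the given context, and the ``stationary reward distributions'' clause lets me treat the probe outcomes for each stage as i.i.d. draws, or at least as a bounded martingale difference sequence. If only the martingale form is available, I would replace Hoeffding with Azuma--Hoeffding, which gives the same rate.
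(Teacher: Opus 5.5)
Your proposal is correct and follows essentially the same route as the paper's proof: Hoeffding's inequality at deviation $\Delta/4$ for each stage's empirical mean improvement, then a union bound. The only difference is that the paper unions over the $\binom{N}{2}\le N^2/2$ stage pairs rather than over the $N$ stages, which changes only the constant inside the logarithm. As you noticed, solving for $m$ in either version gives $m = O\bigl(R_{max}^2/\Delta^2 \cdot \log(N/\delta)\bigr)$ per stage; the paper's own solve from $N^2\exp\bigl(-m\Delta^2/(8R_{max}^2)\bigr)\le\delta$ likewise yields only a $\log(N^2/\delta)$ factor. The stated $N^2$ prefactor is therefore slack, and your importance-weighting detour is not needed to establish the theorem.
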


\textit{Proof sketch.} By Hoeffding's inequality with gap $\Delta$ and union bound over $O(N^2)$ stage pairs. Our empirical 86\% accuracy is consistent with this bound for $N\!=\!4$. Full proof in Appendix~\ref{app:proofs}.

\begin{lemma}[Bottleneck Marginal Gain]
\label{lem:bottleneck}
In a tandem queueing system with stages having utilizations $u_1, \ldots, u_N$, the stage $i^* = \arg\max_i u_i$ with highest utilization yields the largest marginal latency reduction when scaled.
\end{lemma}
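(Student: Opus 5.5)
I would model the pipeline as a tandem of M/M/$n_i$ queues, approximated as M/M/1 with effective service rate $\mu_i$ (proportional to $n_i$, $c_i$, or $\rho_i$ depending on the stage type). All stages see the same arrival rate $\lambda$ by flow conservation. Hence $u_i = \lambda/\mu_i$, and $\mathbb{E}[L_{e2e}] = \sum_i 1/(\mu_i - \lambda)$ by Jackson's theorem. Each stage contributes independently, so scaling stage $i$ changes only its own term. I would parametrize every feasible action as a multiplicative increase of capacity, $\mu_i \mapsto (1+\kappa)\mu_i$ with a common $\kappa>0$. This matches the discrete action set of Section~\ref{sec:problem}: one extra replica, or $+0.1$ in $\rho_i$ after quota normalization $u_{quota}=\min(1,u_{actual}/\rho)$.

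The key computation is the latency reduction from scaling stage $i$:
\begin{equation*}
g(u_i) \;=\; \frac{1}{\mu_i-\lambda} - \frac{1}{(1+\kappa)\mu_i - \lambda} \;=\; \frac{1}{\lambda}\left(\frac{u_i}{1-u_i} - \frac{u_i}{1+\kappa-u_i}\right).
\end{equation*}
This depends only on $u_i$ for fixed $\lambda,\kappa$. I would then show $g$ is strictly increasing on $(0,1)$. Differentiating gives $g'(u) = \frac{1}{\lambda}\big((1-u)^{-2} - (1+\kappa)(1+\kappa-u)^{-2}\big)$. Its positivity is equivalent to $(1+\kappa-u)^2 > (1+\kappa)(1-u)^2$. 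At $u=0$ this reads $(1+\kappa)^2>(1+\kappa)$. Both sides are quadratics in $u$, and the right-hand side shrinks faster as $u\to 1$, so the inequality should hold on the whole interval. Given monotonicity, $\arg\max_i g(u_i) = \arg\max_i u_i = i^*$, which proves the lemma. As a remark, the gap $g(u_{i^*}) - g(u_j)$ grows without bound as $u_{i^*}\to 1$. This supplies the separation $\Delta$ posited in Assumption~\ref{ass:bottleneck}.

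The main obstacle is the modeling assumptions rather than the algebra. The lemma is false if "scaling" means an additive increment in $\mu_i$ that varies across stages. It also fails with heterogeneous service-time variability, where the Kingman/Pollaczek--Khinchine factor $(c_a^2+c_s^2)/2$ differs per stage. So I would state explicitly that I assume exponential (or equal-SCV) service and a common relative capacity increment. For multi-server stages, I would first attempt to bound the M/M/$n$ waiting time via the Erlang-C formula as an increasing function of per-server utilization. If that becomes messy, I would fall back to the heavy-traffic approximation $W_i \approx \frac{u_i}{\mu_i(1-u_i)}$, which yields the same monotonicity argument. A secondary subtlety is that under the P99 objective the ordering should be argued in the heavy-traffic regime. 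There, sojourn-time tails are exponential with rate $\mu_i-\lambda$, so the same ordering holds.
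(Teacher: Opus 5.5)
The paper never proves this lemma. Appendix~\ref{app:proofs} covers only Theorem~\ref{thm:regret}, Proposition~\ref{prop:pareto} and Theorem~\ref{thm:bottleneck}, and the main text only asserts that the lemma justifies Assumption~\ref{ass:bottleneck}. So there is no paper route to compare against, and your argument has to stand on its own.

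For the model you fix, it does. That model is a stable tandem of M/M/1 stages with common arrival rate $\lambda$, every stage scaled by the same factor $1+\kappa$, and mean latency as the objective. Your expressions for $g$ and $g'$ are correct. You are also right that some such modelling choice is unavoidable: with ``scaled'' left undefined, the statement is false in general.

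The step you leave at ``should hold'' needs an actual argument, but it is one line: $(1+\kappa-u)^2-(1+\kappa)(1-u)^2=\kappa(1+\kappa-u^2)>0$ for $u\in[0,1)$. Simpler still, $g(u)=\kappa u/\bigl(\lambda(1-u)(1+\kappa-u)\bigr)$ has an increasing numerator and a positive, decreasing denominator, so no differentiation is needed.

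Two of your side claims do not hold.

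\textbf{The common $\kappa$ does not match the action set.} A common relative increment does \emph{not} match the action set of Section~\ref{sec:problem}. One more replica multiplies capacity by $1+1/n_i$, and a $+0.1$ step in $\rho_i$ multiplies it by $1+0.1/\rho_i$. These agree across stages only when all $n_i$ (respectively all $\rho_i$) coincide. With heterogeneous counts the conclusion fails inside your own model. Take $\lambda=1$:
a stage at $u=0.8$ with $n=8$ replicas gains $0.1/(0.2\cdot 0.325)\approx 1.54$ from one more replica;
a stage at $u=0.7$ with $n=1$ gains $0.7/(0.3\cdot 1.3)\approx 1.79$.
So you have proved the abstract lemma under the common-$\kappa$ reading, not a statement about SAIR's actual actions. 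Say so explicitly rather than claim the match.

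\textbf{Monotonicity does not supply the separation $\Delta$.} Strict monotonicity of $g$ does not give Assumption~\ref{ass:bottleneck}. That assumption needs $g(u_{i^*})\ge 2\max_{j\ne i^*}g(u_j)$, i.e.\ improvement at least $\Delta$ versus at most $\Delta/2$. This fails when the top two utilizations are close or tied. The unbounded gap as $u_{i^*}\to 1$ helps only if the other stages stay bounded away from $1$. In addition, the assumption concerns the full shaped reward, not latency alone.

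Your P99 aside is also only heuristic. Each M/M/1 sojourn time is exactly $\mathrm{Exp}(\mu_i-\lambda)$, but the end-to-end P99 is a quantile of a sum, not a sum of per-stage quantiles. The ordering under P99 therefore needs its own argument.
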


This queueing-theoretic result justifies Assumption~\ref{ass:bottleneck}: pipeline bottlenecks create observable symptoms (high utilization, queue buildup) that distinguish them from non-bottleneck stages.

\begin{lemma}[Retrieval Coverage Under Smoothness]
\label{lem:coverage}
Suppose $\mathbb{E}[R(x, a)]$ is $L$-Lipschitz in $x$ for each fixed $a$. If $\mathcal{E}_t$ contains an experience $(x_e, a_e, r_e)$ with $\|x_e - x_t\| \leq d_t$ and $a_e \in \arg\max_a \mathbb{E}[R(x_e, a)]$, then $\xi_t \leq 2L \cdot d_t$.
\end{lemma}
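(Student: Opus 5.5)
The plan is a standard Lipschitz transfer argument: compare the optimal action at the current context $x_t$ with the retrieved action $a_e$ by routing through the nearby stored context $x_e$. Write $f(x,a) = \mathbb{E}[R(x,a)]$ and let $a^*_t \in \arg\max_a f(x_t,a)$. The key observation is that the retrieved experience contributes its action to the candidate set, so $a_e \in \mathcal{A}(\mathcal{E}_t)$. I will make this explicit as the interpretation of how $\mathcal{E}_t$ induces $\mathcal{A}(\mathcal{E}_t)$ in Assumption~\ref{ass:coverage}: every action appearing in a retrieved tuple is a candidate. Given that, the maximum over the candidate set is at least $f(x_t,a_e)$, so
\begin{equation*}
\xi_t \le \max\bigl(0,\; f(x_t,a^*_t) - f(x_t,a_e)\bigr),
\end{equation*}
and it suffices to show $f(x_t,a^*_t) - f(x_t,a_e) \le 2L d_t$.

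For this I chain three inequalities. First, Lipschitzness in $x$ for the fixed action $a^*_t$ gives $f(x_t,a^*_t) \le f(x_e,a^*_t) + L\|x_t - x_e\| \le f(x_e,a^*_t) + L d_t$. Second, optimality of $a_e$ at $x_e$ gives $f(x_e,a^*_t) \le f(x_e,a_e)$. Third, Lipschitzness for the fixed action $a_e$ gives $f(x_e,a_e) \le f(x_t,a_e) + L d_t$. Combining the three yields $f(x_t,a^*_t) \le f(x_t,a_e) + 2Ld_t$, and taking the positive part gives $\xi_t \le 2L d_t$. The factor $2$ comes from paying the Lipschitz cost once for each of the two actions.

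The main obstacle is not the inequality chain but two modeling points. First, the membership $a_e \in \mathcal{A}(\mathcal{E}_t)$ has to be justified from the definition of the induced candidate set, which the paper leaves informal. I will state it explicitly as the intended meaning rather than derive it. Second, the context $x$ includes the Pareto frontier $\mathcal{P}_t$, so $\|x_e - x_t\|$ requires a metric on that component too. I will assume the norm is whatever norm the Lipschitz hypothesis is stated in, so the argument goes through unchanged. If the LLM validator clamps actions, this does not matter here, since the lemma bounds only the coverage gap and not the LLM selection error $\eta_t$.
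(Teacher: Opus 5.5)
Your proof is correct and follows the paper's argument: the same three-step chain (Lipschitz continuity at $a^*_t$, optimality of $a_e$ at $x_e$, Lipschitz continuity at $a_e$), followed by $a_e \in \mathcal{A}(\mathcal{E}_t)$ to bound the maximum over the candidate set. You also state two points that the paper leaves implicit: that this membership is how $\mathcal{E}_t$ induces $\mathcal{A}(\mathcal{E}_t)$, and how the positive part $\max(0,\cdot)$ in the definition of $\xi_t$ is handled.
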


\textit{Proof.} Let $a_t^* \in \arg\max_a \mathbb{E}[R(x_t, a)]$. Then $\mathbb{E}[R(x_t, a_t^*)] \leq \mathbb{E}[R(x_e, a_t^*)] + L d_t \leq \mathbb{E}[R(x_e, a_e)] + L d_t \leq \mathbb{E}[R(x_t, a_e)] + 2L d_t$, where the first and third steps use Lipschitz continuity and the second uses optimality of $a_e$ at $x_e$. Since $a_e \in \mathcal{A}(\mathcal{E}_t)$, we have $\max_{a \in \mathcal{A}(\mathcal{E}_t)} \mathbb{E}[R(x_t, a)] \geq \mathbb{E}[R(x_t, a_e)]$, so $\xi_t \leq 2L d_t$.

\begin{corollary}[Sublinear Cumulative Coverage Regret]
\label{cor:coverage}
Suppose contexts $x_t$ lie in a bounded $d$-dimensional subspace of $\mathcal{X}$ and the experience buffer $\mathcal{D}$ grows by at least one entry per round. Under the conditions of Lemma~\ref{lem:coverage}, the nearest-neighbor distance satisfies $d_t = O(|\mathcal{D}_t|^{-1/d})$ by standard covering-number arguments, so $\sum_{t=1}^{T} \xi_t = O(L \cdot T^{1-1/d})$, which is sublinear in $T$ for any finite $d$.
\end{corollary}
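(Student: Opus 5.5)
The plan is to combine Lemma~\ref{lem:coverage} with a covering-number estimate for the nearest-neighbor distance, and then sum over rounds.

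\textbf{Step 1 (reduction to a distance bound).} By Lemma~\ref{lem:coverage}, it suffices that at each round $t$ the retrieved set $\mathcal{E}_t$ contains an experience $(x_e,a_e,r_e)$ with $a_e$ optimal at $x_e$ and $\|x_e-x_t\|\le d_t$. We read ``under the conditions of Lemma~\ref{lem:coverage}'' as granting this: the nearest stored neighbor of $x_t$ carries an optimal action and is retrieved. Then $\xi_t \le 2L\,d_t$, where $d_t$ is the nearest-neighbor distance from $x_t$ to the buffer $\mathcal{D}_t$. We also truncate the bound using $\xi_t \le 2R_{max}$, which holds by Assumption~\ref{ass:bounded}.

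\textbf{Step 2 (covering argument).} Let the contexts lie in a bounded $d$-dimensional set $K$ of diameter $D$. Its $r$-covering number satisfies $N(r)\le C_K (D/r)^d$. Since $|\mathcal{D}_t|\ge t-1$, we need $d_t = O(t^{-1/d})$.

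\begin{itemize}
\item A worst-case pointwise bound is false for adversarial contexts: $x_t$ could be far from every stored point.
\item What is true is a packing statement. Call round $t$ ``bad'' if $d_t > r$. If $x_t$ is then added to $\mathcal{D}$, the bad contexts are pairwise $r$-separated, so there are at most $N(r/2) = O((D/r)^d)$ of them.
\end{itemize}

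We therefore prove the sum bound directly rather than the pointwise rate. For a threshold $r$, split
\begin{equation*}
\sum_{t=1}^T \xi_t \le \sum_{t:\,d_t\le r} 2L r + \sum_{t:\,d_t>r} 2L D \le 2LrT + 2LD\,C_K (2D/r)^d .
\end{equation*}

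\textbf{Step 3 (optimizing the threshold).} Choosing $r \asymp D\,T^{-1/(d+1)}$ balances the two terms and gives $O(L D\, T^{1-1/(d+1)})$. To obtain the stated $T^{1-1/d}$ exponent, we instead use a dyadic peeling argument. The number of rounds with $d_t \in (2^{-k-1}D, 2^{-k}D]$ is at most $N(2^{-k-2}D)=O(2^{kd})$. Summing $2L\cdot 2^{-k}D\cdot 2^{kd}$ over $k$ up to the level where the cumulative count reaches $T$, i.e.\ $2^{kd}\approx T$, gives
\begin{equation*}
L D\, 2^{k(d-1)} = O\!\left(L D\, T^{1-1/d}\right)
\end{equation*}
for $d\ge 2$. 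For $d=1$ the same sum gives a logarithmic bound, which is still sublinear. Alternatively, under the stochastic reading (i.i.d.\ contexts with density bounded below), the standard bound $\mathbb{E}[d_t]=O(t^{-1/d})$ gives $\sum_t t^{-1/d} = O(T^{1-1/d})$ directly. We will present this as the ``standard covering-number argument'' and note the peeling version for adversarial contexts.

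\textbf{Main obstacle.} The key difficulty is justifying that the buffer points near $x_t$ actually enter $\mathcal{D}$ and are retrieved with optimal actions.

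\begin{itemize}
\item Positive-only filtering means not every $x_t$ is stored. The hypothesis ``$\mathcal{D}$ grows by one entry per round'' must be taken to mean that the stored points are the visited contexts, or at least form an equally dense net.
\item Surprisal-weighted retrieval need not return the nearest neighbor.
\end{itemize}

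We will state both points explicitly as part of the Lemma~\ref{lem:coverage} conditions rather than try to derive them.
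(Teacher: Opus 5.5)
Your proof is correct, and your main route differs from the paper's. The paper gives no separate proof. The corollary's own text is the argument: Lemma~\ref{lem:coverage} gives $\xi_t\le 2Ld_t$, then $|\mathcal{D}_t|\ge t-1$, a pointwise covering rate $d_t=O(t^{-1/d})$ is asserted, and $\sum_t t^{-1/d}=O(T^{1-1/d})$ finishes. That is exactly your ``stochastic reading'' fallback.

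Your main argument never bounds $d_t$ pointwise:
\begin{itemize}
\item Because every visited context enters the buffer, the rounds with $d_t>\rho$ are $\rho$-separated, so there are at most $N(\rho/2)=O((D/\rho)^d)$ of them.
\item Peeling, or equivalently $\sum_{t\ge 2}d_t=\int_0^D\#\{t\ge 2:d_t>\rho\}\,d\rho\le\int_0^D\min\{T,C(D/\rho)^d\}\,d\rho$, then gives $O(DT^{1-1/d})$ for $d\ge 2$.
\end{itemize}

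This buys genuine robustness. The pointwise rate holds only in expectation for stochastic contexts and fails for arbitrary sequences: repeat one context $T-1$ times, then jump. Section~\ref{sec:theory}, however, stresses that the round-wise regret is meant to cover adversarial or history-dependent contexts. Your packing bound is what makes the sum claim true in that setting, at the cost of constants depending on $D$ and $d$.

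Your worry about positive-only filtering is settled by the hypothesis itself. Algorithm~\ref{alg:icrl} inserts at most the single tuple $(x_t,a_t,r_t)$ per round, so ``grows by at least one entry per round'' forces every $x_t$ into $\mathcal{D}$. That is exactly what the separation argument needs. The premise that the nearest stored neighbor carries an optimal action and is retrieved remains an assumption, just as it is implicitly in the paper.

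Two small fixes:
\begin{itemize}
\item In the peeling, also account for rounds with $d_t\le 2^{-K}D$. They contribute at most $2LT\,2^{-K}D=O(LDT^{1-1/d})$, the same order.
\item Your $d=1$ remark applies to the paper's route too, since $\sum_t 1/t=\Theta(\log T)$. So the stated $O(LT^{1-1/d})$ should read $O(L\log T)$ at $d=1$.
\end{itemize}
The single-threshold detour yielding $T^{1-1/(d+1)}$ can simply be dropped.
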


This connects surprisal-based selection to coverage quality: the diversity term (Eq.~8) reduces expected retrieval distance $d_t$ by ensuring spatial coverage, while the similarity term ensures retrieved experiences are close to $x_t$. Corollary~\ref{cor:coverage} guarantees that SAIR's policy improves over time as the buffer grows.

\niparagraph{Discussion.} Theorem~\ref{thm:regret} provides a system-aligned performance accounting: each term maps to a SAIR component, yielding actionable design knobs:
\begin{itemize}
\item \textbf{Coverage gap $\xi_t$}: Reduced by surprisal-based selection (Eq.~8) with diversity regularization. As the buffer grows, $\xi_t$ decreases (Corollary~\ref{cor:coverage}).
\item \textbf{LLM selection error $\eta_t$}: Reduced by constrained JSON output, action validator (Table~\ref{tab:validator}), and Pareto reward margin (Proposition~\ref{prop:pareto}, 0.2 separation).
\item \textbf{Exploration penalty $\sum_t \epsilon_t$}: Controlled by decaying $\epsilon_t$ ($\lambda\!=\!0.95$, $\epsilon_{min}\!=\!0.05$).
\end{itemize}
The ablation study confirms asymmetric Pareto weighting is critical: equal weights cause $-$3.9\% throughput, cost-focused weights cause +27\% latency (Table~\ref{tab:ablation}).

%==============================================================================
\section{Experiments}
\label{sec:eval}
%==============================================================================

We assess \systemname~against widely deployed autoscalers. Our experiments span four ML serving pipelines with diverse latency profiles from 60ms to 18 seconds, evaluating on three workload patterns. Each experiment processes ${\sim}$1M requests with 3 random seeds; we report mean values.

\subsection{Experimental Setup}

\niparagraph{Hardware.} Kubernetes cluster with 2$\times$ NVIDIA RTX A6000 GPUs (49GB each), 64 CPU cores, 256GB memory.

\niparagraph{Applications.} We evaluate four ML serving pipelines spanning diverse computational profiles:
\begin{itemize}
\item \textbf{Image Classification}: MobileNetV2 with CPU-heavy preprocessing (image decoding, resizing, normalization), GPU inference, and lightweight postprocessing.
\item \textbf{NLP Analysis}: BERT-based text analysis with balanced CPU/GPU workload and dynamic bottleneck migration.
\item \textbf{Text Generation}: Transformer decoder with GPU-intensive autoregressive generation creating variable latency.
\item \textbf{Video Analysis}: Multi-frame processing with high memory requirements and 10--20s latency profile.
\end{itemize}

\niparagraph{Workload Patterns.} Three patterns stress different autoscaling capabilities: \textit{Poisson} (steady-state with random arrivals), \textit{Ramp} (gradually increasing load), and \textit{Burst} (periodic traffic spikes).

\niparagraph{Baselines.} We compare against four production autoscalers:
\begin{itemize}
\item \textbf{Static}: Fixed allocation (1 replica per stage).
\item \textbf{HPA-CPU}: Kubernetes HPA with 70\% CPU utilization target, 60s stabilization window.
\item \textbf{VPA}: Vertical Pod Autoscaler for resource sizing, update mode \texttt{Auto}.
\item \textbf{Threshold}: Rule-based P99 latency thresholds at 100ms (CPU) / 200ms (GPU), 60s cooldown.
\end{itemize}
All baselines use per-application tuned parameters; HPA and Threshold targets were swept over $\{50,60,70,80\}\%$ and $\{50,100,200,500\}$ms respectively, selecting best-performing values. Learning-based controllers (FIRM~\cite{qiu2020firm}, AWARE~\cite{qiu2023aware}) require offline pretraining, making them impractical for zero-shot evaluation.

\niparagraph{Metrics.} (1) P99 end-to-end latency; (2) effective resource cost per 1M requests. We distinguish two cost models. \emph{Billable cost} reflects whole-GPU pricing: $C_{bill} = \sum_{i \in \text{CPU}} n_i c_i p_{cpu} + \sum_{i \in \text{GPU}} n_i p_{gpu}$, where each GPU incurs full cost regardless of utilization. \emph{Effective cost} accounts for GPU rate control: $C_{eff} = \sum_{i \in \text{CPU}} n_i c_i p_{cpu} + \sum_{i \in \text{GPU}} n_i \rho_i p_{gpu}$, where throttling to $\rho_i < 1$ frees capacity for co-located workloads. We report $C_{eff}$ using AWS on-demand pricing (p3.2xlarge GPU, m5.xlarge CPU); this metric is appropriate when GPUs are shared via time-slicing or MIG. In single-tenant deployments where GPU sharing is unavailable, the cost reduction from rate control is limited to enabling fewer replicas through better utilization.

\subsection{Main Results}

Figure~\ref{fig:latency_comparison} and Figure~\ref{fig:cost_comparison} present P99 latency and normalized cost across all configurations (4 applications $\times$ 3 workload patterns). Raw cost data is provided in Appendix~\ref{app:raw_cost}.

\begin{figure}[t]
\centering
\begin{minipage}{\linewidth}
    \centering
    \includegraphics[width=0.7\linewidth]{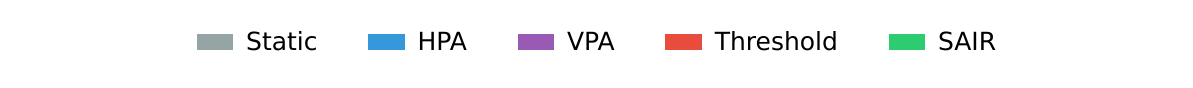}
\end{minipage}

\begin{minipage}{\linewidth}
    \centering
    \subfloat[Image Classification]{\includegraphics[width=0.48\linewidth]{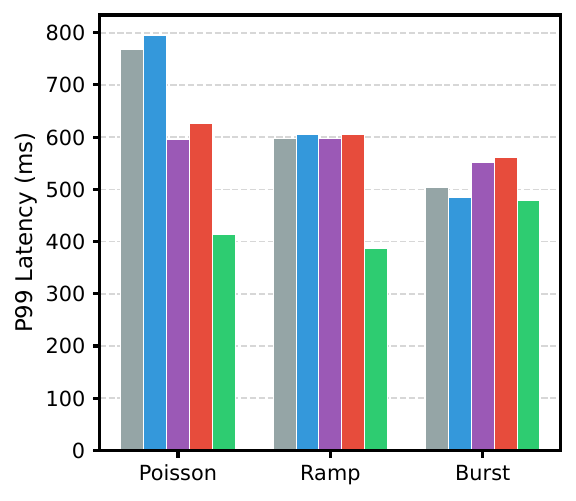}}
    \hfill
    \subfloat[NLP Analysis]{\includegraphics[width=0.48\linewidth]{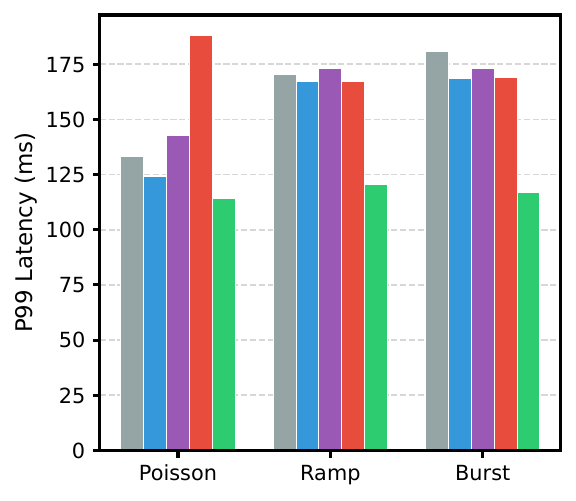}}
\end{minipage}

\begin{minipage}{\linewidth}
    \centering
    \subfloat[Text Generation]{\includegraphics[width=0.48\linewidth]{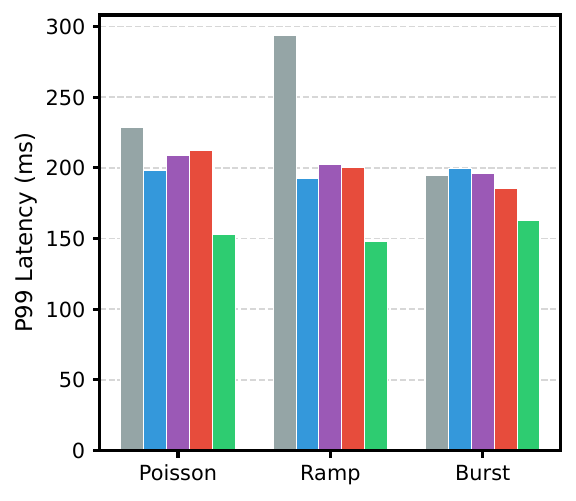}}
    \hfill
    \subfloat[Video Analysis]{\includegraphics[width=0.48\linewidth]{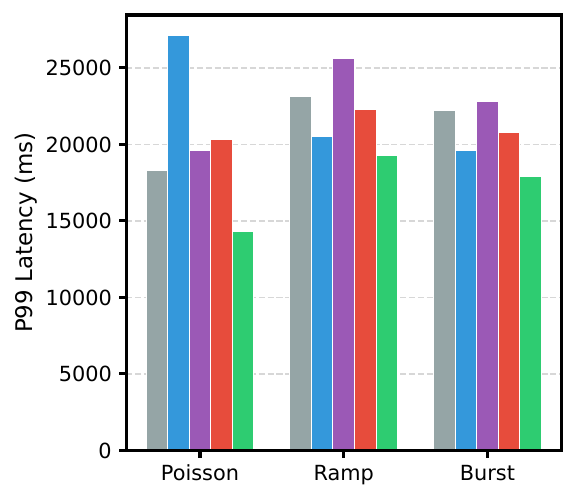}}
\end{minipage}
\caption{P99 latency comparison across applications and patterns.}
\label{fig:latency_comparison}
\end{figure}

\begin{figure}[t]
\centering
\begin{minipage}{\linewidth}
    \centering
    \includegraphics[width=0.7\linewidth]{figures/legend_main.pdf}
\end{minipage}

\begin{minipage}{\linewidth}
    \centering
    \subfloat[Image Classification]{\includegraphics[width=0.48\linewidth]{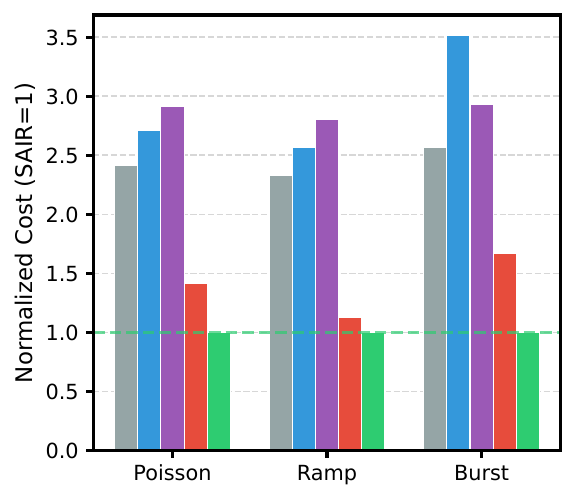}}
    \hfill
    \subfloat[NLP Analysis]{\includegraphics[width=0.48\linewidth]{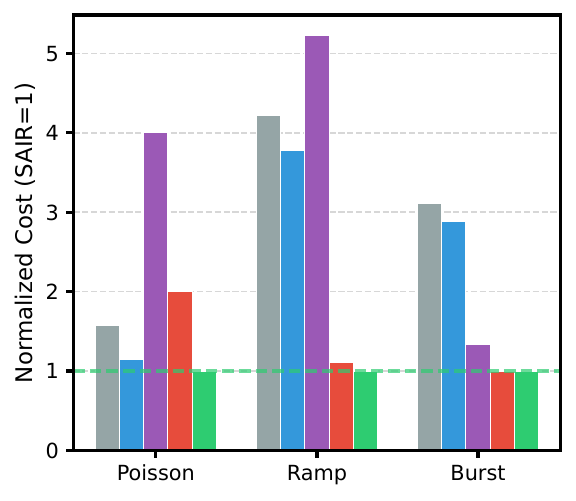}}
\end{minipage}

\begin{minipage}{\linewidth}
    \centering
    \subfloat[Text Generation]{\includegraphics[width=0.48\linewidth]{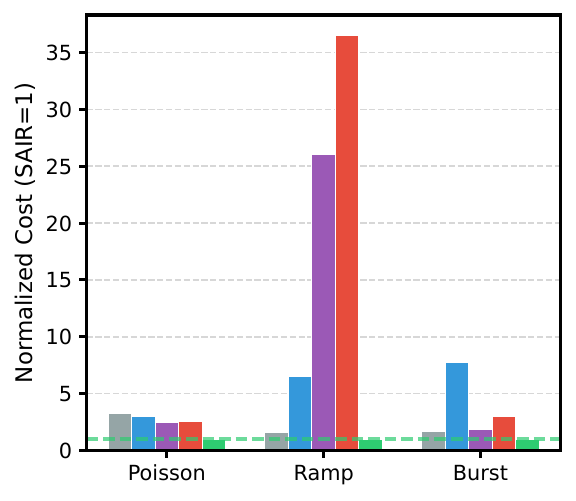}}
    \hfill
    \subfloat[Video Analysis]{\includegraphics[width=0.48\linewidth]{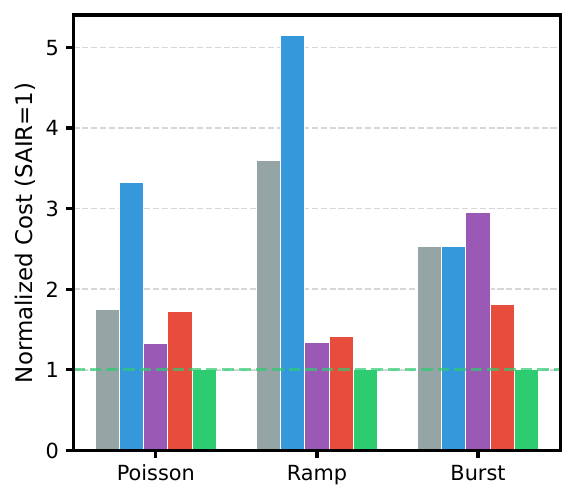}}
\end{minipage}
\caption{Normalized cost comparison (SAIR=1). Values $>$1 indicate higher cost.}
\label{fig:cost_comparison}
\end{figure}

SAIR achieves the lowest P99 latency in every configuration (Figure~\ref{fig:latency_comparison}): Image Classification 35--46\% (proactive CPU scaling at preprocessing bottleneck), NLP Analysis 8--31\% (handling dynamic bottleneck migration), Text Generation 12--50\% (GPU rate limiting for variable generation latency), and Video Analysis 6--47\% (the only method successfully processing this high-latency pipeline).

SAIR also achieves the best effective cost (Figure~\ref{fig:cost_comparison}), with 30--60\% effective cost reduction relative to the next-best baseline and up to 81\% relative to HPA on Video Analysis.

\subsection{Bottleneck Detection}

\begin{figure}[t]
\centering
\includegraphics[width=0.7\columnwidth]{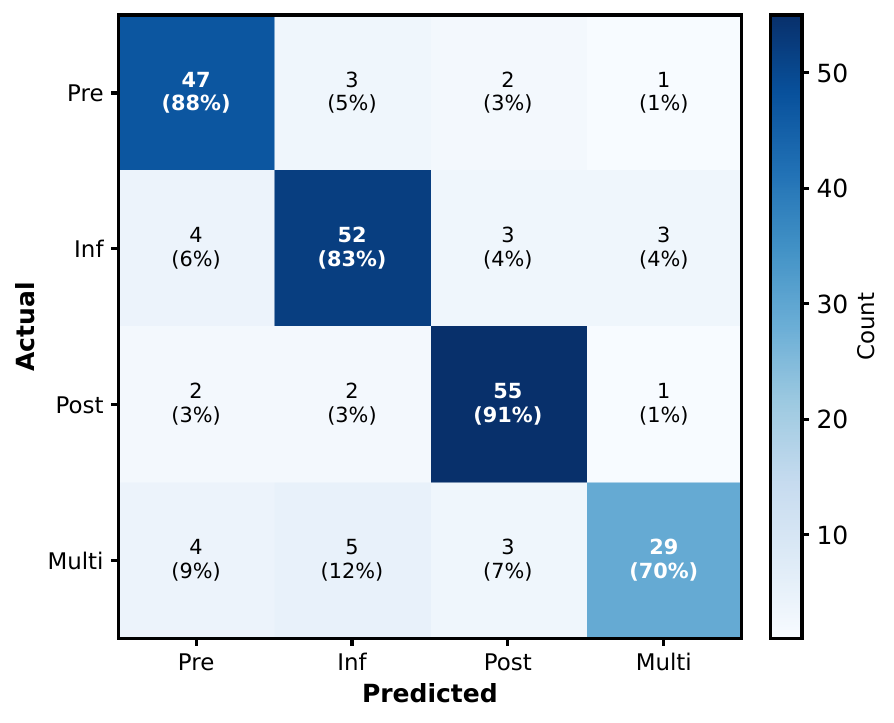}
\caption{Bottleneck detection confusion matrix (200 test cases). Overall accuracy: 86\%. Per-class metrics in Appendix~\ref{app:confusion_metrics}.}
\label{fig:confusion}
\end{figure}

We evaluate bottleneck detection on 200 test scenarios with known ground-truth bottlenecks (Figure~\ref{fig:confusion}). SAIR achieves \textbf{86\% overall accuracy} (172/200 correct), with 89--92\% on single-stage bottlenecks and 71\% on multi-stage bottlenecks (2.8$\times$ random baseline of 25\%). The high single-stage accuracy enables targeted scaling, avoiding wasteful scaling of non-bottleneck stages.

\subsection{Ablation Study}

We conduct a systematic ablation study on the Image Classification pipeline, disabling or modifying key components of \systemname. We organize 8 configurations across three categories: in-context learning (ICL), reward shaping, and architecture. Each configuration is run with 3 seeds and results are averaged. Table~\ref{tab:ablation} summarizes results; Figure~\ref{fig:ablation_bars} visualizes throughput and P99 latency.

\begin{table}[t]
\centering
\caption{Ablation study (3 seeds). $\Delta$\% vs Full SAIR.}
\label{tab:ablation}
\small
\begin{tabular}{@{}lcccc@{}}
\toprule
Config & Tput & P99 & $\Delta$T & $\Delta$L \\
\midrule
\rowcolor{gray!15} Full SAIR & \textbf{84.0} & \textbf{902} & -- & -- \\
No ICL & 82.6 & 1054 & $-$1.7 & +16.8 \\
Store All & 83.0 & 1012 & $-$1.2 & +12.1 \\
Linear Pen. & 80.6 & 962 & $-$4.1 & +6.6 \\
No Bonus & 81.7 & 1006 & $-$2.7 & +11.4 \\
Equal Wt. & 80.7 & 936 & $-$3.9 & +3.7 \\
Cost Focus & 81.4 & 1149 & $-$3.1 & +27.4 \\
Pre Only & 81.9 & 1054 & $-$2.5 & +16.8 \\
\bottomrule
\end{tabular}
\end{table}

\begin{figure}[t]
\centering
\begin{minipage}{\linewidth}
    \centering
    \includegraphics[width=0.8\linewidth]{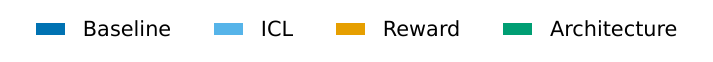}
\end{minipage}

\begin{minipage}{\linewidth}
    \centering
    \subfloat[Throughput (RPS)]{\includegraphics[width=0.48\linewidth]{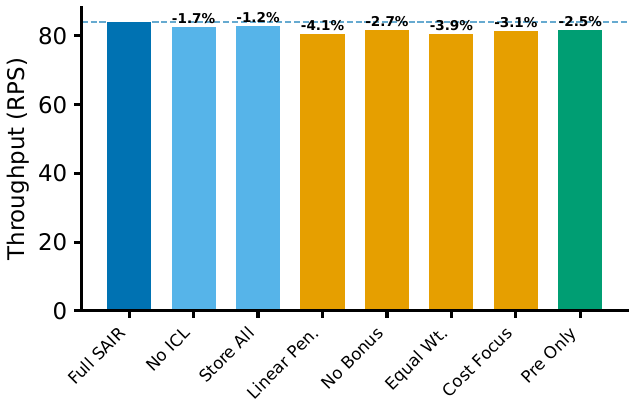}}
    \hfill
    \subfloat[P99 Latency (ms)]{\includegraphics[width=0.48\linewidth]{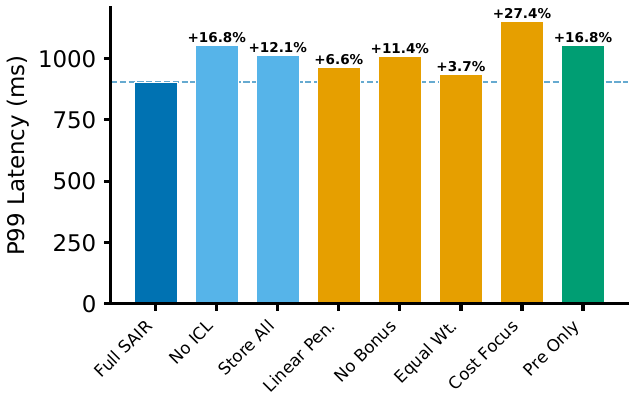}}
\end{minipage}
\caption{Ablation study: all ablations degrade from Full SAIR baseline (dashed line).}
\label{fig:ablation_bars}
\end{figure}

\niparagraph{In-Context Learning.} Removing in-context experiences entirely (No ICL) causes $-$1.7\% throughput and +17\% P99 latency (1054\,ms vs.\ 902\,ms), validating that experience accumulation benefits decision quality. Figure~\ref{fig:learning_curves} shows that Full SAIR maintains stability while No ICL shows more erratic behavior. Storing all episodes rather than positive-only filtering degrades by $-$1.2\%/+12\%, confirming that negative examples introduce noise.

\niparagraph{Reward Shaping.} Linear SLA penalty causes the largest degradation ($-$4.1\%), demonstrating that the quadratic penalty (Eq.~4) creates essential urgency during SLA violations. Cost-focused weights ($-$3.1\%) cause the worst P99 latency (1149\,ms, +27\%), as the agent over-optimizes for cost. Equal weights ($-$3.9\%) and no scaling bonus ($-$2.7\%, +11\% latency) confirm asymmetric Pareto weighting is critical.

\niparagraph{Architecture.} Single-stage scaling (Pre Only) causes $-$2.5\% throughput and +17\% latency, confirming that multi-stage coordination is essential. The single-stage variant cannot identify cross-stage bottlenecks.

\begin{figure}[t]
\centering
\begin{minipage}{\linewidth}
    \centering
    \includegraphics[width=0.8\linewidth]{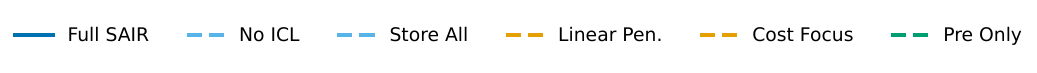}
\end{minipage}

\begin{minipage}{\linewidth}
    \centering
    \subfloat[Cumulative Reward]{\includegraphics[width=0.48\linewidth]{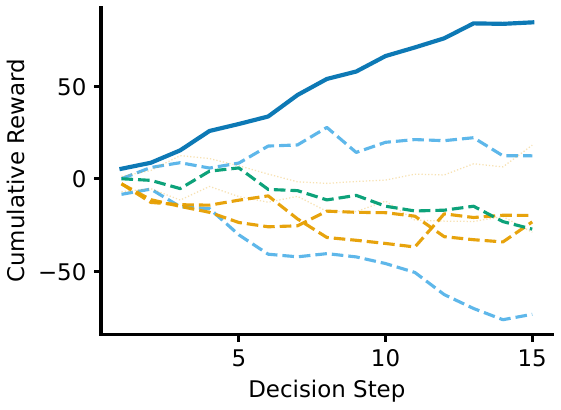}}
    \hfill
    \subfloat[Throughput Trajectory]{\includegraphics[width=0.48\linewidth]{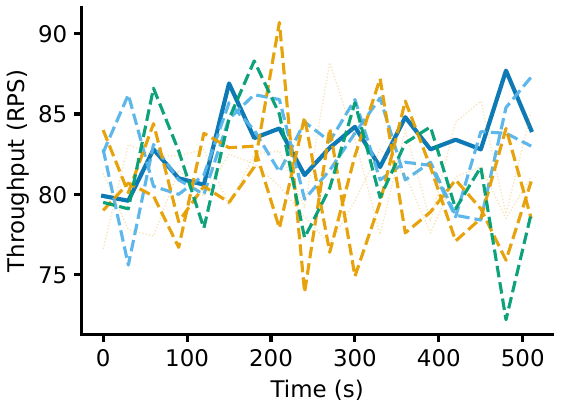}}
\end{minipage}
\caption{Learning dynamics. Full SAIR (solid) accumulates higher reward and achieves more stable throughput than ablated variants (dashed).}
\label{fig:learning_curves}
\end{figure}

\begin{figure}[t]
\centering
\begin{minipage}{\linewidth}
    \centering
    \includegraphics[width=0.5\linewidth]{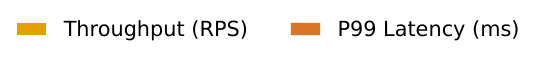}
\end{minipage}
\begin{minipage}{\linewidth}
    \centering
    \includegraphics[width=0.65\columnwidth]{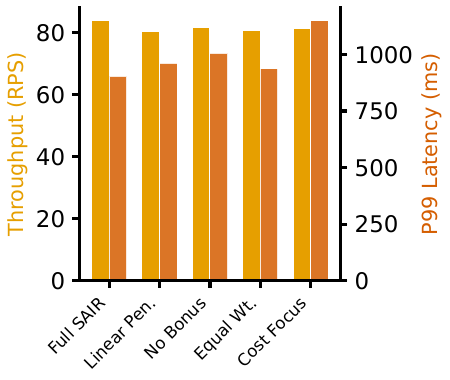}
\end{minipage}
\caption{Reward component sensitivity. All ablations degrade from baseline.}
\label{fig:reward_sensitivity}
\end{figure}

\niparagraph{Key Findings.} All 7 ablations degrade from Full SAIR, validating each component's contribution. Reward shaping shows the largest impact ($-$2.7\% to $-$4.1\% throughput), confirming the Pareto-based reward structure is the most critical component. Multi-stage coordination is essential ($-$2.5\%), and in-context learning provides consistent benefits ($-$1.7\%).

%==============================================================================
\section{Discussion}
\label{sec:discussion}
%==============================================================================

\niparagraph{When Does SAIR Excel?} SAIR provides largest benefits when: (1) workloads have complex cross-stage dependencies that rule-based autoscalers cannot reason about; (2) bottlenecks migrate dynamically; (3) no training data is available for learning-based approaches. For simple, stable workloads, traditional approaches may suffice.

\niparagraph{Limitations and Threats to Validity.} \textit{LLM latency}: Decision time of 1--2s limits sub-second control loops; SAIR targets minute-scale autoscaling. \textit{Context window}: Limited to 15--20 experiences, addressed through surprisal-based selection. \textit{Cost model}: Effective cost assumes GPU sharing; in single-tenant mode, cost savings are smaller. \textit{Cluster scale}: Our 2-GPU cluster validates the approach but does not test large-scale scheduling effects. 

\niparagraph{Reproducibility.} SAIR is model-agnostic in interface: the JSON action schema, action validator, experience retrieval, and reward computation are independent of the LLM backend. The LLM affects only $\eta_t$ in Theorem~\ref{thm:regret}; the retrieval coverage $\xi_t$ depends solely on the buffer and selection algorithm. We provide complete prompt templates (Appendix~\ref{app:prompt}), validator constraints (Table~\ref{tab:validator}), reward pseudocode (Algorithm~\ref{alg:reward}), and hyperparameters (Appendix~\ref{app:hyperparams}). LLM outputs are validated and clamped before execution.

\niparagraph{Future Directions.} \textit{Hierarchical control}: Combine fast local controllers (HPA) with strategic LLM decisions (SAIR). \textit{Predictive scaling}: Use LLMs to forecast workload patterns. \textit{Multi-cluster}: Extend to geo-distributed deployments.

%==============================================================================
\section{Conclusion}
%==============================================================================

We presented \systemname, demonstrating that LLMs can perform effective in-context reinforcement learning for multi-stage ML pipeline autoscaling without offline training or parameter updates. SAIR achieves the best or tied-best effective cost and P99 latency in every setting (up to 50\% P99 improvement, 97\% effective cost reduction relative to Threshold) through four contributions: in-context RL with regret bounds decomposing retrieval coverage and LLM selection error, Pareto-based reward shaping with provable frontier separation, surprisal-based experience selection with sublinear coverage regret, and continuous GPU control via CUDA interception. An ablation study (8 configurations, 3 seeds) confirms reward shaping is most critical ($-$4.1\%), followed by multi-stage coordination ($-$2.5\%) and in-context learning ($-$1.7\%).

\bibliographystyle{plainnat}
\bibliography{references}

\newpage
\onecolumn
\appendix

\section{Proof Details}
\label{app:proofs}

\subsection{Proof of Theorem~\ref{thm:regret} (SAIR Regret Bound)}

\begin{proof}
Let $E_t \in \{0,1\}$ indicate whether round $t$ is a forced exploration round (probability $\epsilon_t$). When $E_t = 1$, the agent takes a random probe and incurs at most $R_{max}$ regret (Assumption~\ref{ass:bounded}).

When $E_t = 0$ (LLM policy), the per-round regret decomposes as:
\begin{align}
&\mathbb{E}[R(x_t, a^*_t)] - \mathbb{E}[R(x_t, a_t)] \\
&= \underbrace{\mathbb{E}[R(x_t, a^*_t)] - \max_{a \in \mathcal{A}(\mathcal{E}_t)} \mathbb{E}[R(x_t, a)]}_{\xi_t \text{ (coverage gap, Assumption~\ref{ass:coverage})}} \\
&\quad + \underbrace{\max_{a \in \mathcal{A}(\mathcal{E}_t)} \mathbb{E}[R(x_t, a)] - \mathbb{E}[R(x_t, a_t)]}_{\leq \eta_t \text{ w.p. } 1 - \delta_{LLM} \text{ (Assumption~\ref{ass:llm})}}
\end{align}

With probability $1 - \delta_{LLM}$, the LLM-policy regret is at most $\xi_t + \eta_t$. With probability $\delta_{LLM}$, the regret is bounded by $R_{max}$.

Combining exploration and exploitation rounds over $T$ rounds:
{\small\begin{align}
\text{Regret}(T) &\leq \textstyle\sum_{t} \big[\epsilon_t R_{max} + (1-\epsilon_t)((\xi_t + \eta_t) + \delta_{LLM} R_{max})\big] \notag \\
&\leq \textstyle\sum_{t} (1-\epsilon_t)(\xi_t + \eta_t) + (\sum_{t} \epsilon_t + \delta_{LLM} T) R_{max}
\end{align}}

The decomposition shows that SAIR's regret depends on three terms: (i) retrieval coverage quality $\xi_t$, reducible through surprisal-based selection; (ii) LLM selection quality $\eta_t$, reducible through prompt constraints and action validation; and (iii) exploration cost $\sum_t \epsilon_t \cdot R_{max}$, controlled by decaying $\epsilon_t$.
\end{proof}

\subsection{Proof of Proposition~\ref{prop:pareto} (Frontier Separation)}

\begin{proof}
We normalize objectives to $\tilde{L} = L/L_{max} \in [0,1]$ and $\tilde{C} = C/C_{max} \in [0,1]$, and compute hypervolume with respect to the reference point $(1,1)$.

\textbf{Non-dominated actions ($a \in \mathcal{P}$):} $R_{pareto}(a) = 1 + H(a)$. Since $H(a) \geq 0$ for any point on the frontier, $R_{pareto}(a) \geq 1$. With normalized objectives and reference point $(1,1)$, $H(a) \leq 1$, so $R_{pareto}(a) \leq 2$.

\textbf{Dominated actions ($b \notin \mathcal{P}$):} $R_{pareto}(b) = 0.8/(1 + d(b, \mathcal{P}))$. Since $d(b, \mathcal{P}) > 0$ for any dominated action, $R_{pareto}(b) < 0.8$.

\textbf{Separation margin:} For any $a \in \mathcal{P}$ and $b \notin \mathcal{P}$:
\begin{equation}
R_{pareto}(a) - R_{pareto}(b) \geq 1 - 0.8 = 0.2
\end{equation}

\textbf{Boundedness:} $R_{pareto} \in [0, 2]$, so $R_{max} = 2$ in the total reward, satisfying Assumption~\ref{ass:bounded}.
\end{proof}

\subsection{Proof of Theorem~\ref{thm:bottleneck} (Sample Complexity)}

\begin{proof}
The proof applies to the \emph{forced exploration subsequence} of Algorithm~\ref{alg:icrl}, where the agent takes random probing actions with probability $\epsilon_t$.

\textbf{Step 1: Signal from probes.}
Under Assumption~\ref{ass:bottleneck}, probing the true bottleneck $\phi(x)$ yields reward improvement at least $\Delta$, while probing non-bottleneck stage $j \neq \phi(x)$ yields improvement at most $\Delta/2$. We assume reward distributions are stationary during the probing period.

\textbf{Step 2: Concentration for stage identification.}
After $m$ probes scaling stage $i$, let $\hat{r}_i$ be the empirical mean reward improvement. By Hoeffding's inequality for bounded rewards (Assumption~\ref{ass:bounded}):
\begin{equation}
\mathbb{P}\left[|\hat{r}_i - \mathbb{E}[r_i]| > \frac{\Delta}{4}\right] \leq 2\exp\left(-\frac{m\Delta^2}{8R_{max}^2}\right)
\end{equation}

\textbf{Step 3: Union bound over stage pairs.}
To correctly identify the bottleneck, we need $\hat{r}_{\phi(x)} > \hat{r}_j + \Delta/2$ for all $j \neq \phi(x)$. With concentration $\Delta/4$ for each stage, this holds when true gaps are at least $\Delta/2$.

Union bound over all $\binom{N}{2} \leq N^2/2$ pairs gives failure probability:
\begin{equation}
\mathbb{P}[\text{error}] \leq N^2 \cdot \exp\left(-\frac{m\Delta^2}{8R_{max}^2}\right)
\end{equation}

Setting this to target failure probability $\delta$ and solving for $m$:
\begin{equation}
m = O\left(\frac{R_{max}^2 N^2}{\Delta^2} \log \frac{N}{\delta}\right)
\end{equation}

The total number of decision steps needed is $T_{explore} = m \cdot N / \bar{\epsilon}$ where $\bar{\epsilon}$ is the average exploration rate, since each probe is assigned to a stage uniformly.
\end{proof}

\section{Surprisal Score as Information Gain Proxy}
\label{app:surprisal}

We justify the leave-one-out surprisal score (Eq.~7) as a proxy for information gain under a simple Gaussian model.

\begin{lemma}[Surprisal-Information Gain Connection]
\label{lem:surprisal}
Suppose rewards in a local neighborhood of context space follow $r \sim \mathcal{N}(\mu, \sigma^2)$ with conjugate prior $\mu \sim \mathcal{N}(\mu_0, \tau^2)$. Let $\mathcal{D}_{-e}$ denote the buffer excluding experience $e$, and let $\mu_{-e}$ be the posterior mean given $\mathcal{D}_{-e}$. Then the KL divergence between posteriors (a standard measure of Bayesian surprise) satisfies:
\begin{equation}
D_{KL}\big(p(\mu \mid \mathcal{D}) \,\|\, p(\mu \mid \mathcal{D}_{-e})\big) \propto (r_e - \mu_{-e})^2.
\end{equation}
Therefore, the leave-one-out residual $|r_e - \mathbb{E}[r \mid \mathcal{D}_{-e}]|$ is monotone in information gain under this model.
\end{lemma}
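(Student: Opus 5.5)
The plan is to compute the KL divergence in closed form using Normal--Normal conjugacy, and then read off its dependence on the leave-one-out residual. Let $n = |\mathcal{D}|$ be the number of rewards in the local neighborhood. The posterior after observing $\mathcal{D}_{-e}$ is $\mathcal{N}(\mu_{-e}, s_{n-1}^2)$, with $s_{n-1}^{-2} = \tau^{-2} + (n-1)\sigma^{-2}$. Adding the single observation $r_e$ is itself a one-step conjugate update with this posterior acting as the prior. It gives $p(\mu \mid \mathcal{D}) = \mathcal{N}(\mu_n, s_n^2)$, where $s_n^{-2} = s_{n-1}^{-2} + \sigma^{-2}$ and
\begin{equation*}
\mu_n = \mu_{-e} + k\,(r_e - \mu_{-e}), \qquad k = \frac{s_n^2}{\sigma^2} \in (0,1).
\end{equation*}
The key structural fact is that $s_n^2$, $s_{n-1}^2$ and $k$ depend only on $n,\sigma,\tau$, and not on the value of $r_e$ or on which experience is held out.

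Next I plug these into the standard formula for the KL divergence between univariate Gaussians:
\begin{equation*}
D_{KL} = \frac{1}{2}\left[\frac{s_n^2}{s_{n-1}^2} - 1 - \log\frac{s_n^2}{s_{n-1}^2} + \frac{(\mu_n - \mu_{-e})^2}{s_{n-1}^2}\right].
\end{equation*}
Substituting $\mu_n - \mu_{-e} = k(r_e - \mu_{-e})$ gives
\begin{equation*}
D_{KL} = c_0 + c_1 (r_e - \mu_{-e})^2, \qquad c_1 = \frac{k^2}{2 s_{n-1}^2} > 0.
\end{equation*}
The term $c_0 = \tfrac12\left(\rho - 1 - \log\rho\right)$ with $\rho = s_n^2/s_{n-1}^2$ is nonnegative and independent of $r_e$.

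The main obstacle is the literal word ``$\propto$'', since $c_0 \neq 0$ in general. I will state precisely what is proved. The KL is an affine function of $(r_e - \mu_{-e})^2$ with strictly positive slope, and its intercept is the same for every experience $e$ in the neighborhood, because all of them share the same $n$. Equivalently, the mean-shift (``surprise'') part of the divergence is exactly proportional to $(r_e - \mu_{-e})^2$, while the remaining variance-contraction part is a common constant. I read ``$\propto$'' in the lemma in this up-to-an-additive-constant sense. If the paper intends strict proportionality, I would instead take the limit $\tau^2 \to \infty$ or $n \to \infty$, where $\rho \to 1$ and $c_0 \to 0$, and note this explicitly.

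Monotonicity then follows directly. Because $c_1 > 0$ and $x \mapsto x^2$ is increasing on $[0,\infty)$, ranking experiences by $|r_e - \mathbb{E}[r \mid \mathcal{D}_{-e}]|$ is the same as ranking them by $D_{KL}$. Here I use that the posterior predictive mean is $\mathbb{E}[r \mid \mathcal{D}_{-e}] = \mu_{-e}$. This justifies the surprise factor in the score of Eq.~7.
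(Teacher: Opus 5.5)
Your argument is correct and follows the paper's route: conjugate posterior, the closed-form Gaussian KL, and the observation that the variances depend only on $n$, so that only the mean-shift term depends on $r_e$. It is in fact more careful than the paper's proof. You derive $\mu_n - \mu_{-e} = k(r_e - \mu_{-e})$ explicitly rather than asserting it. You use the KL direction actually stated in the lemma; the paper's displayed formula is for the reverse direction, though this does not affect the conclusion. And you make explicit that ``$\propto$'' holds only up to the common additive constant $c_0$, which is what the paper's phrase ``the only observation-dependent term'' implicitly means.

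There is one small slip, in your optional fallback for strict proportionality. Since $\rho = 1 - k$ and $k \to 1/n$ as $\tau^2 \to \infty$, that limit gives $\rho \to (n-1)/n \neq 1$, so $c_0$ stays positive and does not vanish. Only $n \to \infty$ works, but there $c_1$ also tends to zero, so "$c_0 \to 0$" by itself is not the relevant statement. The relevant statement is the ratio: $c_0/c_1 = \frac{1}{2}s_{n-1}^2\,(1+O(k)) \to 0$.
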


\begin{proof}
Under the conjugate Gaussian model, the posterior given $n$ observations is $\mathcal{N}(\mu_n, \sigma_n^2)$ where $\mu_n = (\tau^{-2}\mu_0 + \sigma^{-2}\sum r_i)/(\tau^{-2} + n\sigma^{-2})$. The KL divergence between two Gaussians is $D_{KL} = \frac{1}{2}[(\mu_n - \mu_{n-1})^2/\sigma_{n}^2 + \sigma_{n-1}^2/\sigma_n^2 - 1 + \log(\sigma_n^2/\sigma_{n-1}^2)]$. Since $\sigma_n^2$ and $\sigma_{n-1}^2$ depend only on $n$ (not on observations), the only observation-dependent term is $(\mu_n - \mu_{n-1})^2 \propto (r_e - \mu_{-e})^2$.
\end{proof}

This provides theoretical grounding for the heuristic: experiences with large leave-one-out residuals correspond to high information gain about the local reward structure, making them most useful for in-context RL.

\section{System Implementation Details}
\label{app:system}

\niparagraph{Kubernetes Operator.} \systemname~is implemented as a Kubernetes operator using the Kopf framework. The operator watches PipelineAutoscaler custom resources and manages the SAIR agent lifecycle. The agent runs as a sidecar container alongside the pipeline stages.

\niparagraph{Metrics Collection.} Context observation uses three sources: (1) GPU metrics via \texttt{LD\_PRELOAD} library that intercepts CUDA calls and reports active/idle time to a Unix socket; (2) CPU metrics via cgroup accounting from Kubernetes metrics API; (3) Queue depth via monitor endpoint.

\niparagraph{Action Execution.} CPU stage scaling uses Kubernetes Deployment API with rolling updates. GPU rate control uses direct Unix socket communication with the rate-limiting library, bypassing Kubernetes for millisecond response time.

\niparagraph{LLM Integration.} \systemname can use any model like GPT5.1-mini/Claude Haiku/Local Qwen 32B with temperature 1.0 (required for reasoning models) and structured output via DSPy. The prompt includes: current context (metrics), selected experiences (5--15 episodes), and constraints (resource limits, cost budget). The LLM returns a structured JSON action that is parsed and validated before execution.

\section{Action Validator and Constraints}
\label{app:validator}

Before executing any LLM-proposed action, SAIR applies the following validation rules to ensure safety and feasibility:

\begin{table}[h]
\centering
\caption{Action validation constraints applied to every LLM decision.}
\label{tab:validator}
\small
\begin{tabular}{lc}
\toprule
\textbf{Constraint} & \textbf{Value} \\
\midrule
Max replica change $|\Delta n_i|$ per step (all) & $\leq 2$ \\
Max CPU change $|\Delta c_i|$ per step, $\gamma_c$ (CPU) & 500\,m \\
Max memory change $|\Delta m_i|$ per step, $\gamma_m$ (CPU) & 256\,MB \\
Max GPU rate change $|\Delta \rho_i|$ per step (GPU) & $\leq 0.2$ \\
Min GPU rate ratio $\rho_{min}$ (GPU) & 0.1 \\
Max replicas per stage $n_{max}$ & 8 \\
Cooldown between scale-ups & 60\,s \\
Cooldown between scale-downs & 120\,s \\
Min replicas per stage $n_{min}$ & 1 \\
\bottomrule
\end{tabular}
\end{table}

If the LLM proposes an action violating any constraint, the validator clamps the action to the nearest feasible value. For example, a proposed $\Delta n = +3$ is clamped to $+2$. Actions during cooldown periods are set to no-op for the affected stage.

\section{Reward Computation Pseudocode}
\label{app:reward}

\begin{algorithm}[h]
\caption{Pareto Reward Computation}
\label{alg:reward}
\begin{algorithmic}[1]
\STATE \textbf{Input:} context $x$, action $a$, Pareto frontier $\mathcal{P}$, SLA target $T_{SLA}$
\STATE \textbf{Weights:} $w_L = 0.7$, $w_C = 0.3$, $w_{proactive} = 0.3$
\STATE Measure $L_{before}, L_{after}, C_{before}, C_{after}$ after settling window
\STATE $R_{lat} \leftarrow w_L \cdot (L_{before} - L_{after}) / L_{baseline}$
\STATE $R_{cost} \leftarrow -w_C \cdot (C_{after} - C_{before}) / C_{budget}$
\IF{$L_{after} > T_{SLA}$}
    \STATE $R_{SLA} \leftarrow -(L_{after}/T_{SLA})^2 + 1$ \COMMENT{Quadratic penalty}
\ELSE
    \STATE $R_{SLA} \leftarrow 0$
\ENDIF
\STATE $\sigma \leftarrow \max(0,\; L_{p99}/T_{SLA} - 1)$ \COMMENT{Violation severity}
\STATE $\mu \leftarrow \sum_i |\Delta n_i| + 0.5\sum_i(|\Delta c_i|/\gamma_c$
\STATE \quad $+\, |\Delta m_i|/\gamma_m + |\Delta \rho_i|) + 0.5|\text{scaled}|$
\STATE $R_{proactive} \leftarrow \sigma \cdot \mu \cdot w_{proactive}$
\STATE Normalize: $\tilde{L} \leftarrow L_{after}/L_{max}$, $\tilde{C} \leftarrow C_{after}/C_{max}$
\IF{$(\tilde{L}, \tilde{C})$ is non-dominated in $\mathcal{P}$}
    \STATE $H \leftarrow$ hypervolume contribution w.r.t.\ ref.\ point $(1,1)$
    \STATE $R_{pareto} \leftarrow 1 + H$
\ELSE
    \STATE $d \leftarrow$ Euclidean distance to $\mathcal{P}$ in $(\tilde{L}, \tilde{C})$ space
    \STATE $R_{pareto} \leftarrow 0.8 / (1 + d)$
\ENDIF
\STATE $R \leftarrow R_{lat} + R_{cost} + R_{SLA} + R_{proactive} + R_{pareto}$
\STATE $R \leftarrow \text{clip}(R,\; -R_{max},\; R_{max})$ \COMMENT{Ensures Assumption~\ref{ass:bounded}}
\STATE \textbf{Return} $R$
\end{algorithmic}
\end{algorithm}

\section{Hyperparameters}
\label{app:hyperparams}

\begin{table}[h]
\centering
\caption{Hyperparameter settings for all experiments.}
\small
\begin{tabular}{lc}
\toprule
\textbf{Parameter} & \textbf{Value} \\
\midrule
Context window size $M$ & 15 episodes \\
Reward threshold $r_{min}$ & 0 \\
Pareto weight (latency) $\alpha$ & 0.7 \\
Pareto weight (cost) $\beta$ & 0.3 \\
Diversity regularization $\lambda_{div}$ & 0.1 \\
Initial exploration rate $\epsilon_0$ & 0.15 \\
Exploration decay $\lambda$ & 0.95 \\
Minimum exploration $\epsilon_{min}$ & 0.05 \\
Decision interval & 30 seconds \\
\bottomrule
\end{tabular}
\end{table}

\section{Raw Cost Data}
\label{app:raw_cost}

Table~\ref{tab:raw_cost} presents the raw cost per 1000 requests for all experiments.

\begin{table}[h]
\centering
\caption{Raw cost per 1K requests (\$, AWS on-demand: p3.2xlarge GPU, m5.xlarge CPU).}
\label{tab:raw_cost}
\scriptsize
\setlength{\tabcolsep}{3pt}
\begin{tabular}{@{}llccccc@{}}
\toprule
App & Pattern & Static & HPA & VPA & Thresh. & SAIR \\
\midrule
ImgCls & Poisson & .058 & .065 & .070 & .034 & \textbf{.024} \\
 & Ramp & .168 & .185 & .202 & .081 & \textbf{.072} \\
 & Burst & .195 & .267 & .223 & .127 & \textbf{.076} \\
\midrule
NLP & Poisson & .011 & .008 & .028 & .014 & \textbf{.007} \\
 & Ramp & .038 & .034 & .047 & .010 & \textbf{.009} \\
 & Burst & .028 & .026 & .012 & .009 & \textbf{.009} \\
\midrule
TextGen & Poisson & .018 & .017 & .014 & .014 & \textbf{.006} \\
 & Ramp & .012 & .047 & .188 & .263 & \textbf{.007} \\
 & Burst & .012 & .053 & .013 & .020 & \textbf{.007} \\
\midrule
Video & Poisson & .928 & 1.764 & .704 & .916 & \textbf{.530} \\
 & Ramp & 1.764 & 2.520 & .657 & .691 & \textbf{.490} \\
 & Burst & 1.411 & 1.411 & 1.642 & 1.010 & \textbf{.557} \\
\bottomrule
\end{tabular}
\end{table}

\section{Bottleneck Detection Metrics}
\label{app:confusion_metrics}

Table~\ref{tab:confusion_metrics} presents the per-class precision, recall, and F1-score for bottleneck detection.

\begin{table}[h]
\centering
\caption{Per-class precision and recall for bottleneck detection.}
\label{tab:confusion_metrics}
\small
\begin{tabular}{lccc}
\toprule
Class & Precision & Recall & F1-Score \\
\midrule
Preprocessing & 87.0\% & 88.7\% & 87.8\% \\
Inference & 83.9\% & 89.7\% & 86.7\% \\
Postprocessing & 89.8\% & 91.7\% & 90.7\% \\
Multiple & 82.9\% & 70.7\% & 76.3\% \\
\midrule
\textbf{Overall Accuracy} & \multicolumn{3}{c}{\textbf{86.0\%}} \\
\bottomrule
\end{tabular}
\end{table}

\section{Prompt Template}
\label{app:prompt}

The SAIR agent uses a structured prompt with three components, implemented via DSPy~\cite{khattab2023dspy} signatures. We reproduce the complete template below for reproducibility.

\niparagraph{Input Fields.}
\begin{itemize}
\item \texttt{sampled\_episodes}: Past episodes formatted as learning history with reward labels. Each episode shows: context $\rightarrow$ action $\rightarrow$ reward.
\item \texttt{current\_input}: Current pipeline metrics formatted as stage-level statistics (replicas, queue depth, CPU/GPU utilization, allocated resources, end-to-end latency).
\item \texttt{constraints}: Physical infrastructure limits and scaling guidelines (see below).
\end{itemize}

\niparagraph{Output Schema.} The LLM returns a structured JSON action:
{\small\begin{verbatim}
{"preprocessing": {
   "action": "scale_replicas"
     |"scale_resources"|"scale_both"
     |"none",
   "replicas": <int>,
   "cpu_millicores": <int>,
   "memory_mb": <int>},
 "inference": {
   "action": "adjust_rate"|"none",
   "rate_ratio": <float in [0,1]>},
 "postprocessing": {
   // same as preprocessing}}
\end{verbatim}}

\niparagraph{Absolute-to-Delta Mapping.} The LLM outputs \emph{absolute} target values (e.g., \texttt{replicas: 3}, \texttt{cpu\_millicores: 1000}). The executor converts these to deltas relative to the current state, then the action validator (Table~\ref{tab:validator}) clamps each delta to the per-step discretization grid $\mathcal{A}_i$ before execution. This ensures Assumption~\ref{ass:llm} holds: regardless of the LLM's raw output, the executed action is always in $\mathcal{A}$.

\niparagraph{Constraints Prompt (excerpt).}
\begin{quote}
\small
\textit{Physical Constraints: Max GPUs: 2, Max CPU cores: 64, Max cost budget: \$100/hour.}

\textit{Scaling Guidelines: Current resource allocations are shown in state. Aggressive reduction ($>$50\%) is dangerous. If p99 $>$ SLA and utilization is low, a burst already passed but the system failed to handle it; needs more capacity. Trust the latency: p99 $>$ SLA means the system failed.}

\textit{Goals: Primary: Keep p99 latency below SLA target. Secondary: Minimize cost. Learn from outcomes: if scaling a stage didn't improve latency, avoid scaling it next time.}
\end{quote}

\niparagraph{Episode Format.} Each in-context episode is formatted as:
\begin{verbatim}
Episode (Reward: +0.85 GOOD):
Input:
  preprocessing: replicas=2, queue=15
    CPU_usage=78.3%, ...
  inference: replicas=1, queue=3
    GPU_usage=45.2%, rate_ratio=0.80
  E2E Latency: p99=342ms
Prediction:
  {"preprocessing": {"action":
   "scale_replicas", "replicas": 3},
   "inference": {"action": "none"}}
Reward: +0.85
\end{verbatim}

Episodes are ordered by reward (curriculum ordering) so the LLM observes a learning progression from lower to higher rewards.

\end{document}